\def\eqref#1{equation~\ref{#1}}
\def\1{\bm{1}}
\def\vzero{{\bm{0}}}
\def\vmu{{\bm{\mu}}}
\def\vtheta{{\bm{\theta}}}
\def\vphi{{\bm{\phi}}}
\def\va{{\bm{a}}}
\def\vb{{\bm{b}}}
\def\vk{{\bm{k}}}
\def\vp{{\bm{p}}}
\def\vs{{\bm{s}}}
\def\vx{{\bm{x}}}
\def\vy{{\bm{y}}}
\def\mA{{\bm{A}}}
\def\mI{{\bm{I}}}
\def\mX{{\bm{X}}}
\def\mY{{\bm{Y}}}
\def\mSigma{{\bm{\Sigma}}}
\DeclareMathAlphabet{\mathsfit}{\encodingdefault}{\sfdefault}{m}{sl}
\SetMathAlphabet{\mathsfit}{bold}{\encodingdefault}{\sfdefault}{bx}{n}
\def\gB{{\mathcal{B}}}
\newcommand{\R}{\mathbb{R}}
\newtheorem{theorem}{Theorem}[section]
\newtheorem{definition}[theorem]{Definition}
\newtheorem{conjecture}[theorem]{Conjecture}
\newtheorem{problem}{Problem}[section]
\newcommand{\xxnote}[3]{}
  \renewcommand{\xxnote}[3]{\color{#2}{#1: #3}}
\newcolumntype{X}[2]{%
    >{\adjustbox{angle=#1,lap=\width-(#2)}\bgroup}%
    c%
    <{\egroup}%
}
\newcommand*\rot{\multicolumn{1}{X{90}{1em}}}
\begin{document}

\title{Density Descent for Diversity Optimization}

\author{David H. Lee}
\orcid{0009-0003-1927-5039}
\email{dhlee@usc.edu}
\affiliation{%
  \institution{University of Southern California}
  \city{Los Angeles}
  \state{California}
  \country{USA}
}

\author{Anishalakshmi V. Palaparthi}
\orcid{0009-0003-9163-4075}
\email{palapart@usc.edu}
\affiliation{%
  \institution{University of Southern California}
  \city{Los Angeles}
  \state{California}
  \country{USA}
}

\author{Matthew C. Fontaine}
\orcid{0000-0002-9354-196X}
\email{mfontain@usc.edu}
\affiliation{%
  \institution{University of Southern California}
  \city{Los Angeles}
  \state{California}
  \country{USA}
}

\author{Bryon Tjanaka}
\orcid{0000-0002-9602-5039}
\email{tjanaka@usc.edu}
\affiliation{%
  \institution{University of Southern California}
  \city{Los Angeles}
  \state{California}
  \country{USA}
}

\author{Stefanos Nikolaidis}
\orcid{0000-0002-8617-3871}
\email{nikolaid@usc.edu}
\affiliation{%
  \institution{University of Southern California}
  \city{Los Angeles}
  \state{California}
  \country{USA}
}

\renewcommand{\shortauthors}{Lee et al.}

\begin{abstract}
Diversity optimization seeks to discover a set of solutions that elicit diverse features.
Prior work has proposed Novelty Search (NS), which, given a current set of solutions, seeks to expand the set by finding points in areas of low density in the feature space.
However, to estimate density, NS relies on a heuristic that considers the $k$-nearest neighbors of the search point in the feature space, which yields a weaker stability guarantee.
We propose Density Descent Search (DDS), an algorithm that explores the feature space via CMA-ES on a continuous density estimate of the feature space that also provides a stronger stability guarantee.
We experiment with DDS and two density estimation methods: kernel density estimation (KDE) and continuous normalizing flow (CNF).
On several standard diversity optimization benchmarks, DDS outperforms NS, the recently proposed MAP-Annealing algorithm, and other state-of-the-art baselines.
Additionally, we prove that DDS with KDE provides stronger stability guarantees than NS, making it more suitable for adaptive optimizers.
Furthermore, we prove that NS is a special case of DDS that descends a KDE of the feature space.

\end{abstract}

\begin{CCSXML}
<ccs2012>
   <concept>
       <concept_id>10010147.10010178.10010205</concept_id>
       <concept_desc>Computing methodologies~Search methodologies</concept_desc>
       <concept_significance>500</concept_significance>
       </concept>
   <concept>
       <concept_id>10002950.10003648.10003662.10003667</concept_id>
       <concept_desc>Mathematics of computing~Density estimation</concept_desc>
       <concept_significance>300</concept_significance>
       </concept>
 </ccs2012>
\end{CCSXML}

\ccsdesc[500]{Computing methodologies~Search methodologies}
\ccsdesc[300]{Mathematics of computing~Density estimation}

\keywords{quality diversity, kernel density estimation, novelty search}

\maketitle

\section{Introduction} \label{sec:introduction}

\begin{figure}
    \centering
    \includegraphics[width=\linewidth]{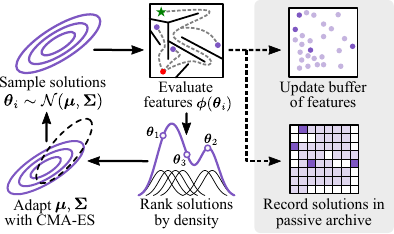}
    \caption{We propose density descent search (DDS) for solving diversity optimization (DO) problems. DDS first draws solutions from a Gaussian $\mathcal{N}(\vmu, \mSigma)$. After computing the solution features (in this case, the final position of the robot in a maze), DDS ranks solutions by density.
    This density ranking is passed to CMA-ES, which updates the search distribution to sample solutions with lower density on the next iteration.
    Concurrently, solutions are stored in a buffer that forms the basis for density estimates, and in a passive archive that tracks all discovered solutions.}
    \label{fig:diagram}
\end{figure}

We study how stable, continuous approximations of density can accelerate the search for diverse solutions to optimization problems.

There is a range of applications where searching directly for behavioral diversity results in finding suboptimal solutions that act as ``stepping stones,'' mitigating convergence to local optima~\citep{gaier2019are}.
A classic example is the problem of training an agent to reach a target position in a  \emph{deceptive maze}~\citep{lehman2011abandoning}.
There, directly minimizing the distance between the agent's final position and a target goal causes the agent to get stuck.
On the other hand, the problem can be solved by ignoring the objective of directly reaching the target and instead attempting to find a diverse range of agents, each of which reaches a different region of the maze.

We refer to the problem of finding a range of solutions that are diverse with respect to prespecified features as diversity optimization (DO). We characterize DO as a special instance of quality diversity optimization (QD)~\citep{pugh2016qd}.
Like DO, QD seeks a diverse set of solutions, but QD also considers an objective function that the solutions must optimize.
For instance, in deceptive maze, one could add an energy consumption objective so that the goal is to find a set of agents that minimize energy consumption while moving to diverse regions in the maze.
Thus, DO is an instance of QD where the objective value of all solutions is constant.

In DO and QD, discovering solutions with new features is challenging because the mapping from solutions to features is complex and non-invertible~\citep{chenu2021selection}.
For example, in maze exploration, it is not known \emph{a priori} how to produce an agent that travels to a given $(x,y)$ position.

A classic method applied to DO is Novelty Search (NS)~\citep{lehman2011abandoning}.
NS retains an \emph{archive} of previously found individual solutions.
It aims to expand the archive by finding solutions that are far in the feature space from existing solutions in the archive.
Specifically, NS optimizes for solutions with a high \emph{novelty score}, where novelty score is the average distance in feature space from a solution to its $k$-nearest neighbors in the archive.

While the novelty score is framed in terms of distance, we study its interpretation as an approximation of \emph{density}~\citep{chenu2021selection}, where density is directly proportional to the number of solutions in a region of feature space.
A high novelty score indicates that a solution's features are far from those of its $k$-nearest neighbors in the archive, i.e., it is located in an area of the feature space with low density.

An alternative approach to DO is to apply general-purpose QD algorithms and assume that all solutions in the search space have identical quality, i.e., equal to some constant. Covariance Matrix Adaptation MAP-Annealing (CMA-MAE)~\citep{fontaine2023covariance} is a state-of-the-art QD algorithm that optimizes for archive improvement with the CMA-ES~\citep{hansen:cma16} optimizer.
CMA-MAE retains a discrete archive of solutions in feature space.
When applied to solve a DO problem by ignoring the solution quality, this archive naturally becomes a histogram that represents the distribution of solutions in feature space, with lower values of the histogram indicating lower density.
CMA-MAE then performs \emph{density descent} on this histogram, where it continually seeks to fill the areas of low density.

We emphasize that, to efficiently explore feature space, both NS and CMA-MAE leverage density estimates of solutions in feature space --- NS is guided by novelty score, while CMA-MAE reads from its histogram.
However, both of these density estimates have drawbacks.
On one hand, the novelty score in NS is \emph{continuous} but provides a \emph{weaker} stability guarantee, meaning that its value can change arbitrarily when new features are discovered.
On the other hand, the histogram in CMA-MAE provides a \emph{stronger} stability guarantee but utilizes a \emph{discrete} approximation that gives flat gradient signals on its bins.

\emph{Our key insight is to overcome the drawbacks of current density estimation methods in DO by introducing continuous, stable approximations of the solution density in feature space.} 
This insight results in the following contributions:
(1) We propose density descent search (DDS; \autoref{fig:diagram}), an algorithm that queries continuous density estimates to guide an optimizer, in our case CMA-ES, to search for solutions that are diverse in the feature space (\autoref{sec:dds}).
We propose two variants of DDS: DDS-KDE leverages non-parametric Kernel Density Estimation (KDE)~\citep{chen2017tutorial}, while DDS-CNF learns the underlying density function with continuous normalizing flow~\citep{lipman2022flow}. 
(2) We show that, when combined with a ranking-based optimizer like CMA-ES, NS reduces to a special case of DDS-KDE (\autoref{sec:connection}).
(3) We prove that KDE provides stronger algorithmic stability guarantees than novelty score (\autoref{sec:connection}).
(4) We demonstrate that our DDS algorithms outperform prior work on 3 out of 4 domains (\autoref{sec:experiments}).
(5) We show that our algorithms perform well on multi-dimensional feature spaces, which currently present significant challenges to QD and DO algorithms.

\section{Problem Definition} \label{sec:problem}

\noindent\textbf{Quality Diversity Optimization (QD).}
For solution $\vtheta \in \R^n$, QD assumes an \emph{objective} function $f: \R^n \to \R$ and $m$ \emph{feature} functions, jointly represented as a vector-valued function $\vphi: \R^n \to \R^m$.
We refer to the image of $\vphi$ as the \emph{feature space} $S$.
The goal of QD is to find, for each $\vs \in S$, a solution $\vtheta$ where $f(\vtheta)$ is maximized and $\vphi(\vtheta) = \vs$.

\noindent\textbf{Diversity Optimization (DO).}
DO is a special case of QD where the objective is constant, i.e., $f(\vtheta)=C$. The goal of DO is to find, for each $\vs \in S$, a solution $\vtheta$ where $\vphi(\vtheta) = \vs$.

\section{Background} \label{sec:background}

\subsection{QD and DO Algorithms}
Various methods address the QD and DO problems by relaxing them to the problem of finding an \emph{archive} (i.e., a set) $\mathcal{A}$ of representative solutions. The structure of the archive defines two major families of algorithms: those based on Novelty Search and those based on MAP-Elites.

\noindent\textbf{Novelty Search (NS).}
The key insight of NS~\citep{lehman2011abandoning} is to discover diverse solutions in feature space by optimizing for solutions that are ``novel'' with respect to a current set of solutions.
Given a set of features $\mathcal{X} \subseteq \R^m$, the novelty of a solution $\vtheta \in \R^n$ and its features $\vy = \vphi(\vtheta)$ relative to $\mathcal{X}$ is encapsulated in its \emph{novelty score}, denoted $\rho(\vy; \mathcal{X})$:
\begin{equation}
    \rho(\vy; \mathcal{X}) = \frac{1}{k}\sum_{\vy' \in N_k(\vy; \mathcal{X})} \mathrm{dist}(\vy, \vy') \label{eq:ns}
\end{equation}
where $N_k(\vy; \mathcal{X})$ is the set of $k$-nearest neighbors to $\vy$ in $\mathcal{X}$, and $\mathrm{dist}$ is a distance function --- henceforth, we consider $\mathrm{dist}$ to be some norm $\lVert\cdot\rVert$.

NS maintains an archive $\mathcal{A}$ of unbounded size and generates solutions with an underlying optimizer, traditionally a genetic algorithm.
For each solution $\vtheta$ produced by the optimizer, NS computes the novelty score with respect to the current archive, i.e., $\rho(\vy; \mathcal{A})$.
If $\rho(\vy; \mathcal{A})$ exceeds an \emph{acceptance threshold}, then $(\vtheta, \vy)$ is added to the archive.
In this manner, NS gradually adds novel solutions to the archive and explores the feature space.

Importantly, the novelty score is non-stationary in that the novelty of a solution $\vtheta$ changes as the archive $\mathcal{A}$ is updated.
We prove in \autoref{sec:connection} that the degree of non-stationarity in novelty score is unbounded for general feature spaces, resulting in significant optimization challenges for adaptive optimizers such as CMA-ES.

While we consider general optimization domains in this work, we note that prior works~\citep{nses} have specialized NS for reinforcement learning domains, particularly the case when no feature function is assumed~\citep{paolo2020unsupervised,cully2019autonomous}.

\noindent\textbf{Multi-dimensional Archive of Phenotypic Elites (MAP-Elites).} 
While NS was developed for DO, MAP-Elites~\citep{mouret2015illuminating} was developed for the general QD setting.
Compared to the unstructured archive of NS, MAP-Elites divides the feature space into a predefined tessellation $T: \R^m \to \{1,\dots,l\}$, where $e \in \{1,\dots,l\}$ is a \emph{cell} in the tessellation and $l$ is the total number of cells.
Given a cell $e$, the MAP-Elites archive associates a solution $\vtheta$ with cell $e$ if and only if the solution's features are contained in $e$, i.e., $T(\vphi(\vtheta)) = e$.
Moreover, MAP-Elites stores at most one solution for every cell in the tessellation.

During a QD search, MAP-Elites first draws solutions from a predefined distribution, e.g., a Gaussian, and inserts the solutions into the archive.
Subsequently, MAP-Elites generates and inserts new solutions by mutating existing archive solutions with a genetic operator, such as the Iso+LineDD operator~\citep{vassiliades2018discovering}.
Importantly, when solutions inserted into the archive land in the same tessellation cell, MAP-Elites only retains the solution with the greatest objective value. Thus, MAP-Elites gradually collects high-performing solutions that have diverse features.
We apply MAP-Elites to DO by setting a constant objective $f(\cdot) = C$ over the solution space. 

The choice of tessellation in MAP-Elites can significantly impact scalability.
The most common tessellation is a grid tessellation~\citep{mouret2015illuminating}, which divides the feature space into equally-sized hyperrectangles.
In high-dimensional feature spaces, grid tessellations require exponentially more memory due to the curse of dimensionality.
Thus, a common alternative is the centroidal Voronoi tessellation (CVT)~\citep{vassiliades2018using}, which divides the feature space into $l$ evenly-sized polytopes.

\noindent\textbf{Covariance Matrix Adaptation Evolution Strategy (CMA-ES).}
One recent line of QD algorithms has combined CMA-ES~\citep{hansen:cma16} with MAP-Elites.
CMA-ES is a state-of-the-art single-objective optimizer that represents a population of solutions with a multivariate Gaussian $\mathcal{N}(\vmu, \mSigma)$.
Each iteration, CMA-ES draws $\lambda$ solutions from the Gaussian.
Based on the \textit{rankings} (rather than the raw objectives) of the solutions, CMA-ES adapts the covariance matrix $\mSigma$ to regions of higher-performing solutions.
While CMA-ES is a derivative-free optimizer, it has been shown to approximate a natural gradient~\citep{akimoto2010bidirectional}.

\noindent\textbf{Covariance Matrix Adaptation MAP-Elites (CMA-ME)}.
The first work to integrate CMA-ES with MAP-Elites was CMA-ME~\citep{fontaine2020covariance}.
The key idea of CMA-ME is to optimize for archive improvement with CMA-ES.
Namely, in addition to a MAP-Elites-style archive, CMA-ME maintains an instance of CMA-ES.
The CMA-ES instance samples solutions from a Gaussian, and the solutions are ranked based on how much they improve the archive, e.g., solutions that found new cells in the archive are ranked high, while those that were not added at all are ranked low.
With this \emph{improvement ranking}, CMA-ES adapts the Gaussian to sample solutions that will further improve the archive.
Note that in DO settings, since the objective is a constant value, the improvement ranking places solutions that filled a new cell ahead of solutions that did not add a new cell.

\noindent\textbf{Covariance Matrix Adaptation MAP-Annealing (CMA-MAE).}
One limitation of CMA-ME is that it focuses too much on exploring for new cells in feature space rather than optimizing the objective~\citep{tjanaka2022approximating}.
CMA-MAE~\citep{fontaine2023covariance} addresses this problem by introducing an \emph{archive learning rate} $\alpha$.
When $\alpha = 1$, CMA-MAE maintains the same exploration behavior as CMA-ME, but when $\alpha = 0$, CMA-MAE focuses solely on optimizing the objective, like CMA-ES.
For $0 < \alpha < 1$, CMA-MAE blends between these two extremes, enabling it to both explore feature space and optimize the objective.

In DO settings, where the objective is constant ($f(\cdot) = C$), CMA-MAE naturally performs \emph{density descent}~\citep{fontaine2023covariance}.
Intuitively, the archive becomes a histogram that represents how many solutions have been found in each region of feature space.
Lower values of the histogram indicate lower \emph{density} of solutions, i.e., if a histogram bin has a low value, few solutions have been discovered in that region of feature space.
CMA-MAE then seeks to \emph{descend} the histogram by searching for solutions that fill the low-valued histogram bins.

\subsection{Density Estimation}

We consider two density estimation methods: Kernel Density Estimation and Continuous Normalizing Flows.

\noindent\textbf{Kernel Density Estimation (KDE).}
KDE is a non-parametric density estimation method that does not make any assumptions on the underlying probability density distribution from which samples are drawn~\citep{chen2017tutorial}.
Given a set of features $\mathcal{X} \subseteq \R^m$, bandwidth parameter $h$, and kernel function $K(\cdot)$, KDE computes the density function $\hat{D} : \R^m \to \R$ for a given feature $\vy \in \R^m$ as:
\begin{align}
    \hat{D}_h(\vy; \mathcal{X}) = \frac{1}{|\mathcal{X}|h} \sum_{\vy' \in \mathcal{X}} K\left(\frac{\lVert \vy - \vy'\rVert}{h}\right)
\label{eq:KDE}
\end{align}
Prior work has thoroughly studied the problem of selecting bandwidth that accurately estimate the underlying density function~\citep{silverman1986density, rudemo1982empirical, bowman1984alternative}.
While accurate density estimation is important for KDE, we show in \autoref{thm:kde_stability} that larger bandwidth results in more stable density estimates (albeit at the cost of accuracy), which is beneficial to the proposed algorithms in \autoref{sec:dds}.  

When optimizing a density estimate (e.g., with gradient descent), KDEs have several advantages over histograms.
First, the shape of a histogram depends on its bin size~\citep{weglarczyk2018kernel}.
Second, the binning procedure leads to a discontinuous optimization landscape and flat gradient signals inside each bin~\citep{weglarczyk2018kernel}.
In contrast, KDEs produce a smooth, continuous density function based on the location of the samples in the underlying distribution~\citep{chen2017tutorial}.

\noindent\textbf{Continuous Normalizing Flow (CNF).}
CNF~\citep{lipman2022flow} is a generative modeling method that constructs a diffusion path between a simple distribution (e.g., a Gaussian distribution) and an unknown distribution.
The diffusion path describes a mapping from a point on the simple distribution to a corresponding point on the unknown distribution such that their probability densities are roughly equal. 
CNFs enable sampling from probability distributions where direct sampling is difficult (e.g., distributions of images).
This is accomplished by sampling from the simple distribution and transforming the sample via the learned diffusion path of the CNF.
However, we utilize CNF to estimate the density of feature space for our proposed algorithms in \autoref{sec:dds}.

\section{Density Descent Search} \label{sec:dds}

We present Density Descent Search (DDS), a DO algorithm that efficiently explores the feature space by leveraging continuous density estimates. 
DDS maintains a buffer of features $\gB$ that represents the distribution of features discovered so far.
Based on this buffer, DDS models a density estimation of the feature distribution, and by querying this density estimate, DDS guides an adaptive optimizer to discover solutions in less-dense areas of feature space.
As it searches for such solutions, DDS expands the discovered set of features.
We provide an overview of DDS in \autoref{fig:diagram} and describe the components of DDS below.

\noindent\textbf{Density Estimation.}
The core of DDS is its representation of feature space density.
We propose two variants of DDS that differ in their density representation.
The first variant, DDS-KDE, represents density with KDE.
With the KDE, we compute density via \autoref{eq:KDE}, with $\mathcal{X}$ set to the buffer of features $\gB$.

The second variant, DDS-CNF, estimates the density in feature space by learning a CNF between the standard normal distribution $\mathcal{N}(\bm{0}, \mI)$ and the observed feature space distribution.
Similar to DDS-KDE, the feature space distribution is represented by the buffer $\gB$.
We compute a density estimate at any location in the feature space by integrating an ordinary differential equation~\citep{lipman2022flow}.
Applying techniques from prior work~\citep{lipman2022flow}, we represent the CNF with a neural network, and on each iteration, we finetune the network on the new distribution of features contained in the buffer.

KDE and CNF differ in how easily their smoothness can be controlled.
On one hand, the shape of the KDE can be controlled with the bandwidth hyperparameter.
Higher bandwidth leads to a smoother density estimate but conceals modes of the true density function,  as illustrated in \autoref{fig:bandwidth}.
On the other hand, while CNF does not require selecting a bandwidth hyperparameter, the smoothness of the density estimation cannot be easily controlled, which undermines the performance of the algorithm (\autoref{sec:analysis}).
We discuss the impacts that the smoothness of the density estimate has on DDS in~\autoref{sec:connection}.

\noindent\textbf{Feature Buffer.}
To provide the basis for density estimates in feature space, DDS maintains a buffer $\gB$ that stores the features of sampled solutions (\autoref{dds:buffer_init}).
This buffer represents the \emph{observed} distribution of the feature space and is updated every time new solutions and features are discovered.
In theory, the buffer can have infinite capacity, storing every feature ever encountered.
In practice, the buffer can only retain a finite number of features due to computation and memory limitations.
To decide which features to retain in the buffer, we manage the buffer with an optimal reservoir sampling algorithm~\citep{li1994reservior}.
This algorithm updates the buffer with online samples that accurately represent the distribution of features discovered so far by DDS (\autoref{dds:buffer_update}).

\noindent\textbf{Optimizer.} DDS guides an adaptive optimizer to discover solutions in low-density regions of the feature space.
Examples of such optimizers include xNES~\citep{glasmachers2010exponential} and Adam~\citep{adam}.
We select CMA-ES as the underlying optimizer due to its reputation as a state-of-the-art optimizer~\citep{hansen:cma16} and its high performance in existing QD algorithms~\citep{fontaine2020covariance, fontaine2021differentiable, fontaine2023covariance}.

\noindent\textbf{Passive Archive.} Following prior work~\citep{pugh2015confronting}, to track how much of the feature space has been explored, DDS inserts all discovered solutions and features into a passive MAP-Elites-style (\autoref{sec:background}) archive $\mathcal{A}$.
While DDS itself only uses the archive to record solutions and features, the archive is useful when computing metrics in experiments (\autoref{sec:design}).

\noindent\textbf{Summary.}
\autoref{alg:dds} shows how the components of DDS come together.
DDS begins by initializing its various components (\autoref{dds:cmaes_init}-\ref{dds:buffer_init}).
During the main loop (\autoref{dds:mainloop}), DDS samples solutions with the underlying optimizer (\autoref{dds:sample}). Since our optimizer is CMA-ES, sampling consists of drawing from a multivariate Gaussian $\mathcal{N}(\vmu, \mSigma)$.
Subsequently, DDS computes the features of the solutions (\autoref{dds:evaluate}) and estimates their feature space density (\autoref{dds:calc_density}).
After sampling solutions, DDS updates its various components.
For instance, in DDS-KDE, the density update (\autoref{dds:density_update}) consists of replacing the feature buffer, and in DDS-CNF, the update involves fine-tuning the neural network on the feature buffer.
Furthermore, solutions with lower density are ranked first on \autoref{dds:ranking}, causing CMA-ES to adapt (\autoref{dds:adapt}) towards less-dense regions of the feature space.

\begin{algorithm}[t]
\caption{Density Descent Search (DDS)}
\label{alg:dds}
\SetKwProg{DDS}{DDS}{}{}
\DontPrintSemicolon
\DDS{($\bm{\phi}(\cdot)$, $density$, $b, N, \lambda, \vmu_0, \sigma$)} {
\KwIn{Feature function $\bm{\phi}(\cdot)$, density estimator object $density$, buffer size $b$, number of iterations $N$, batch size $\lambda$, initial solution $\vmu_0$, and initial step size $\sigma$}
\KwResult{Generates $N \cdot \lambda$ solutions, storing a representative subset of them in a passive archive $\mathcal{A}$.}

\BlankLine
Initialize CMA-ES search point $\vmu := \vmu_0$, search direction $\mSigma=\sigma \mI$, and internal parameters $\vp$ \label{dds:cmaes_init}

Initialize empty archive $\mathcal{A}$ \label{dds:archive_init}

Initialize empty buffer $\gB$ of size $b$ \label{dds:buffer_init}

\For{$iter\gets 1$ \KwTo $N$}{ \label{dds:mainloop}
    Update buffer $\gB$ with $\vphi_{1..\lambda}$ via reservoir sampling \label{dds:buffer_update}
    
    \For{$i\gets 1$ \KwTo $\lambda$}{
        $\vtheta_i \sim \mathcal{N}(\vmu,\mSigma)$ \label{dds:sample} 
        
        $\vphi_i \gets \vphi(\vtheta_i)$ \label{dds:evaluate}
    
        $D_i \gets density(\vphi_i)$ \label{dds:calc_density}

        Add $(\vtheta_i, \vphi_i)$ to archive $\mathcal{A}$ \label{dds:archive_addition}
    }

    Update $density$ with the new buffer $\gB$ \label{dds:density_update}
    
    Rank $\vtheta_i$ in \emph{ascending} order by $D_i$ \label{dds:ranking}
    
    Adapt CMA-ES parameters $\vmu,\mSigma,\vp$ based on density ranking $D_i$ \label{dds:adapt}

}
}
\end{algorithm}

\section{Connection between Novelty Search and Kernel Density Estimates} \label{sec:connection}
We provide theoretical insight into the connection between NS and DDS-KDE and delineate the advantage of KDE over novelty score.
KDE and novelty score are non-stationary since they change as more solutions are discovered by their respective optimizers.
However, the magnitude of non-stationarity is potentially different in KDE and novelty score (\autoref{thm:kde_stability}, \ref{thm:ns_stability}).
Furthermore, when $k \ge |\gB|$ in the $k$-nearest neighbor calculation for novelty score, novelty search becomes a special case of DDS-KDE (\autoref{thm:ns=kde}).
Finally, we conjecture that any meaningful density estimator utilizing a point's distance to its $k$-nearest neighbors will incur similar weak stability guarantees as novelty score (\renewcommand{\theoremautorefname}{Conjecture}\autoref{conj:instability}\renewcommand{\theoremautorefname}{Theorem}).
We include all proofs in \autoref{sec:proofs}.

\noindent\textbf{Stability Under Non-stationarity.}
DO algorithms such as NS, DDS, and CMA-MAE gradually learn a non-stationary density representation (e.g., KDE or histogram) as they explore the feature space.
However, drastic changes in the representation present significant challenges for adaptive optimizers, such as Adam~\citep{adam} and CMA-ES.
If the density representation changes drastically every iteration, it would be impossible for adaptive optimizers to properly adapt its parameters to optimize the density function.

To characterize the extent of change in the density estimate, we appeal to the notion of \emph{uniform stability}~\citep{hardt2016train}, defined as follows:
\begin{definition}
    Let a function $F(\vx;\gB): \R^d \to \R$ be parameterized by a set $\gB \subseteq \R^d$.
    We say that $F$ is $\epsilon$-uniformly stable if for all $\gB, \gB' \subseteq \R^d$, where $\gB$ and $\gB'$ differ by at most one element, we have
    \begin{equation}
        \sup_{\vx \in \R^d} \left|F(\vx; \gB) - F(\vx; \gB')\right| \le \epsilon \label{eq:uniform_stability}
    \end{equation}
\end{definition}
We note that uniform stability has close connections to influence functions in statistics, which measure how much an estimator deviates from the ground truth when given a subset of the data~\citep{cook1980characterizations}.
In this interpretation, uniform stability is an upper-bound of the (empirical) influence function.

We prove that, for KDE, the higher the size of its feature buffer and bandwidth, the stronger the uniform stability guarantee.

\begin{theorem}{\label{thm:kde_stability}}
    A kernel density estimate $\hat{D}_h(\vx;\gB)$ managed with reservoir sampling, such that features in the buffer are exchanged one at a time, is $\frac{1}{|\gB|h}$-uniformly stable, where $h$ is the bandwidth.
\end{theorem}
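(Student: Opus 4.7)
The plan is to exploit the additive structure of the KDE so that almost all summands cancel between $\hat D_h(\vx;\gB)$ and $\hat D_h(\vx;\gB')$, leaving only the contribution of the two features involved in a reservoir swap.

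First I would make precise what ``differ by at most one element'' means under reservoir sampling. Since the reservoir has a fixed capacity and the theorem explicitly stipulates features are exchanged one at a time, any update replaces exactly one stored feature $\vy_{\mathrm{old}}\in\gB$ with a new feature $\vy_{\mathrm{new}}$, so $\gB' = (\gB\setminus\{\vy_{\mathrm{old}}\})\cup\{\vy_{\mathrm{new}}\}$ and, crucially, $|\gB'|=|\gB|$. This equal size is what allows the prefactor $1/(|\gB|h)$ in \autoref{eq:KDE} to factor out cleanly; an add-or-remove interpretation would instead produce a second-order perturbation term and a strictly weaker bound.

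Next I would substitute $\gB$ and $\gB'$ into \autoref{eq:KDE} and subtract. Every summand indexed by a feature present in both buffers cancels, yielding
\begin{equation*}
\hat D_h(\vx;\gB)-\hat D_h(\vx;\gB') \;=\; \frac{1}{|\gB|h}\Bigl[K\!\bigl(\tfrac{\lVert\vx-\vy_{\mathrm{old}}\rVert}{h}\bigr)-K\!\bigl(\tfrac{\lVert\vx-\vy_{\mathrm{new}}\rVert}{h}\bigr)\Bigr].
\end{equation*}
Taking absolute values and supremizing over $\vx\in\R^m$, the bound reduces to controlling each kernel evaluation. Under the standard assumption that $K$ is a non-negative kernel with $\sup_u K(u)\le 1$ (satisfied by the Gaussian, Epanechnikov, triangular, and normalized uniform kernels commonly used with KDE), each term lies in $[0,1]$, so their difference lies in $[-1,1]$, giving
\begin{equation*}
\sup_{\vx\in\R^m}\bigl|\hat D_h(\vx;\gB)-\hat D_h(\vx;\gB')\bigr|\;\le\;\frac{1}{|\gB|h},
\end{equation*}
which is exactly the $\tfrac{1}{|\gB|h}$-uniform stability claim.

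The main subtlety — and the step I would be most careful about in the write-up — is stating the kernel normalization assumption explicitly. A kernel with $K(0)>1$ would change the constant to $\sup_u K(u)/(|\gB|h)$, so one may either impose $\sup K\le 1$ as a hypothesis on $K$ or state the more general $\sup K/(|\gB|h)$ bound and specialize. Beyond that, the argument is essentially a one-line cancellation followed by the triangle inequality; the interpretive contribution is identifying that reservoir sampling preserves buffer size, which is what makes KDE's stability genuinely inversely proportional to $|\gB|$ rather than merely bounded.
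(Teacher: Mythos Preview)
Your proposal is correct and follows essentially the same argument as the paper: interpret the reservoir update as a single swap so that $|\gB'|=|\gB|$, cancel all shared summands in \autoref{eq:KDE}, and bound the remaining two kernel evaluations using $K(\cdot)\in[0,1]$. Your explicit discussion of why the swap preserves $|\gB|$ and of the $\sup_u K(u)\le 1$ assumption is more careful than the paper's write-up, but the underlying mechanism is identical.
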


Higher bandwidth makes the function more stationary and thus more suited for adaptive optimizers.
However, higher bandwidth also leads to \emph{over-smoothing} of the density estimate, which conceals modes of the true density function~\citep{chen2017tutorial} (\autoref{fig:bandwidth}).
Thus, selecting an optimal bandwidth for DDS requires a fine balance between the accuracy and uniform stability of the KDE.

In contrast, novelty score becomes less uniformly stable as the diameter of the feature space $W$ increases:

\begin{theorem}{\label{thm:ns_stability}}
    Novelty score $\rho(\vx;\gB)$ is $\frac{W}{k}$-uniformly stable, where $k$ is the nearest neighbors parameter in novelty score and $W = \max_{\vs_1,\vs_2 \in S} \lVert \vs_1 - \vs_2 \rVert$ is the diameter of the feature space $S$.
\end{theorem}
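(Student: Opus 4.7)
My plan is to interpret ``$\gB$ and $\gB'$ differ by at most one element'' as a reservoir-sampling swap, $\gB' = (\gB \setminus \{\vu\}) \cup \{\vz\}$, consistent with the ``exchanged one at a time'' wording and with the normalization by $|\gB|$ used in \autoref{thm:kde_stability}. For a fixed query $\vx$, I rewrite the novelty score as $\rho(\vx;\gB) = \tfrac{1}{k} S_k(\vx;\gB)$, where $S_k(\vx;\gB)$ denotes the sum of the $k$ smallest distances $\lVert\vx-\vy\rVert$ over $\vy \in \gB$. Writing $\delta_1 \le \delta_2 \le \cdots \le \delta_n$ for these sorted distances (all of which lie in $[0,W]$ by the definition of diameter), the entire argument reduces to showing $|S_k(\vx;\gB') - S_k(\vx;\gB)| \le W$ and taking the supremum over $\vx$.

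I would then run a case analysis on the rank $r$ of $\vu$ in the sorted list (so $\lVert\vx-\vu\rVert = \delta_r$) and on where $\delta_z := \lVert\vx-\vz\rVert$ falls relative to $\delta_k$ and $\delta_{k+1}$. If $r > k$, removing $\vu$ does not disturb the top-$k$, so the change in $S_k$ is either $0$ (when $\delta_z \ge \delta_k$) or $\delta_z - \delta_k$ (when $\delta_z < \delta_k$), and both quantities have magnitude at most $W$. If $r \le k$, removing $\vu$ promotes $\delta_{k+1}$ into the top-$k$, and then inserting $\delta_z$ either leaves that set alone (when $\delta_z \ge \delta_{k+1}$) or displaces $\delta_{k+1}$ (when $\delta_z < \delta_{k+1}$); a short direct computation shows the net change equals $\min(\delta_z, \delta_{k+1}) - \delta_r$, once again a difference of two values in $[0,W]$.

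Combining cases gives $|S_k(\vx;\gB') - S_k(\vx;\gB)| \le W$ uniformly in $\vx, \vu, \vz$, so $|\rho(\vx;\gB') - \rho(\vx;\gB)| \le W/k$, and taking $\sup_{\vx}$ delivers the claimed $W/k$-uniform stability. The main obstacle is the subcase $r \le k$ with $\delta_z \ge \delta_{k+1}$, where two sorted positions move simultaneously: $\delta_r$ leaves the top-$k$ while $\delta_{k+1}$ enters. Bounding each shift separately would produce $2W/k$, which is too weak; the key observation is that $\delta_r \le \delta_k \le \delta_{k+1}$, so the two moves telescope into the single difference $\delta_{k+1} - \delta_r \in [0,W]$ rather than adding. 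This telescoping is what makes the $W/k$ bound tight, as can be witnessed by placing $\vx$ far from $\gB$ and $\vz$ near $\vx$.
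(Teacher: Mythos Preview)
Your proof is correct, but it takes a more laborious route than the paper's. The paper works with the pure insertion $\gB' = \gB \cup \{\vx_j\}$ and argues in one line that $N_k(\vx;\gB)$ and $N_k(\vx;\gB')$ can differ in at most one pair (some $\vx_i$ leaves, $\vx_j$ enters, or nothing changes); the surviving term is then bounded via the reverse triangle inequality,
\[
\tfrac{1}{k}\,\bigl|\lVert \vx-\vx_i\rVert - \lVert \vx-\vx_j\rVert\bigr| \;\le\; \tfrac{1}{k}\,\lVert \vx_i - \vx_j\rVert \;\le\; \tfrac{W}{k}.
\]
Your explicit case analysis on the rank $r$ of the removed point and on the position of $\delta_z$ reaches the same conclusion combinatorially, without invoking the reverse triangle inequality. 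Two things your version buys: you treat the swap perturbation $\gB' = (\gB\setminus\{\vu\})\cup\{\vz\}$ directly, which matches the perturbation model used in the KDE stability proof (the paper's novelty-score proof, as written, only covers insertion); and your telescoping remark in the subcase $r\le k$, $\delta_z \ge \delta_{k+1}$ makes explicit why the bound is $W/k$ rather than $2W/k$. The paper's ``at most one in/out pair'' observation encodes the same telescoping implicitly and in fact extends verbatim to swaps (the $k{-}1$ closest points of $\gB\cap\gB'$ are common to both $N_k$ sets), so the two arguments are really the same phenomenon viewed at different levels of granularity.
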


Therefore, for unbounded feature spaces, the uniform stability of novelty score is also unbounded, and for bounded feature spaces, novelty score has stronger uniform stability guarantees when $k$ is larger.

When the feature space is bounded, as in our experiments, KDE has a stronger stability guarantee than novelty score for bandwidth $h \ge \frac{k}{|\gB|}$.
Following this insight, we select a bandwidth satisfying this inequality with the empirical experiments in \autoref{sec:bandwidth}.
Our theoretical results are corroborated by our experiments in \autoref{sec:experiments}, which demonstrate that DDS-KDE outperforms NS in all domains on all metrics.

\noindent\textbf{Equivalence of NS and DDS-KDE.}
We observe that, when $h = 1$ and as $k \to |\gB|$, the uniform stability upper-bound of novelty score approaches that of KDE.
We prove in \autoref{thm:ns=kde} that when all points are considered in the novelty score computation (i.e., let $k = \infty$), NS is a special case of DDS-KDE under ranking-based optimizers such as CMA-ES.
Specifically, we demonstrate that under these conditions, the ranking of solutions based on their novelty score is identical to the ranking based on their kernel density estimate.

\begin{theorem}{\label{thm:ns=kde}}
Let $\gB \subseteq \R^m$ be a set of features.
Consider the rankings $\pi_\mathrm{NS}$ and $\pi_\mathrm{KDE}$ on another set of features $\{\vphi_1, \dots, \vphi_m\} \subseteq \R^m$ where $\vphi_i = \vphi(\vtheta_i)$.
We define the rankings as follows: $\pi_\mathrm{NS}(i) \ge \pi_\mathrm{NS}(j) \iff \rho(\vphi_i; \gB) \ge \rho(\vphi_j; \gB)$ and $\pi_\mathrm{KDE}(i) \ge \pi_\mathrm{KDE}(j) \iff \hat{D}(\vphi_i; \gB) \le \hat{D}(\vphi_j; \gB)$.
We show that $\pi_{NS} = \pi_{KDE}$, when NS has $k=\infty$ (or, equivalently, $k = |\gB|$) and KDE has triangular kernel $K(u) = 1 - |u|$ with support over the entire feature space $S$.
\end{theorem}

\noindent\textbf{Stability of $\vk$-NN.}
We conjecture that only considering $k$-nearest neighbors in the density representation, for $k < |\gB|$, will inevitably result in poor uniform stability bounds.
Notice that the maximum distance between any two features is the diameter of the feature space, $W$.
Thus, when a new feature replaces a former $k$-nearest neighbors of $\vx$, the total distance between a feature $\vx$ to its $k$-nearest neighbors can change by $W$ in the worst case.
\begin{conjecture}{\label{conj:instability}}
Let $\mathcal{X} \subseteq \R^m$ and $D: \R^m \to \R$ be a density estimator.
If $D(\vx; \mathcal{X})$ for $\vx \in \R^m$ is computed based on the distance to its $k$-nearest neighbors of $\vx$, then $D(\vx; \mathcal{X})$ is $O\left(W\right)$-uniformly stable for $k < |\gB|$.
\end{conjecture}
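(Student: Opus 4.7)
The plan is to formalize the informal phrase ``computed based on the distance to its $k$-nearest neighbors'' and then exploit the fact that a single-element swap in the buffer can alter at most one member of the $k$-nearest neighbor set and, for that member, only by an amount bounded by the diameter $W$. Concretely, I would parameterize any such estimator as $D(\vx;\mathcal{X}) = \Psi\bigl(d_1(\vx;\mathcal{X}), \dots, d_k(\vx;\mathcal{X})\bigr)$, where $d_1 \le \dots \le d_k$ are the sorted distances from $\vx$ to its $k$-nearest neighbors in $\mathcal{X}$ and $\Psi : \R^k \to \R$ is the aggregation function. To make ``$O(W)$-uniformly stable'' quantitatively meaningful, I would assume $\Psi$ is $L$-Lipschitz in the $\ell^1$ norm with $L = O(1/k)$. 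This covers the averaging used in novelty score and any reasonable kernel-weighted average; indeed, $\Psi(d_1,\dots,d_k) = \tfrac{1}{k}\sum_i d_i$ gives $L = 1/k$, which is consistent with \autoref{thm:ns_stability}.

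Next, I would argue pointwise. Fix $\vx \in \R^m$ and let $\mathcal{X}, \mathcal{X}'$ differ by a single swap $\mathcal{X}' = (\mathcal{X} \setminus \{\vy\}) \cup \{\vy'\}$. Split into two cases: (a) if neither $\vy$ is among the $k$-NNs of $\vx$ in $\mathcal{X}$ nor $\vy'$ is among the $k$-NNs of $\vx$ in $\mathcal{X}'$, then the sorted $k$-NN distance vector is unchanged and $D(\vx;\mathcal{X}) = D(\vx;\mathcal{X}')$; (b) otherwise, inserting $\vy'$ displaces at most the current furthest $k$-NN and removing $\vy$ from outside the top-$k$ leaves it untouched, so the sorted distance vectors $\vd$ and $\vd'$ differ in exactly one coordinate. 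Because all distances lie in $[0, W]$, that single coordinate shifts by at most $W$, hence $\lVert \vd - \vd' \rVert_1 \le W$. Applying the Lipschitz hypothesis yields $\lvert D(\vx;\mathcal{X}) - D(\vx;\mathcal{X}')\rvert \le L \cdot W = O(W)$ uniformly in $\vx$, establishing the conjecture.

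The main obstacle is the genericity of ``computed based on the distance.'' Without some regularity assumption on $\Psi$, one can construct pathological estimators whose output depends discontinuously on the $k$-NN distances (e.g., indicator-of-exact-match functions), for which no $O(W)$ bound is possible. So the real content of the proof lies in identifying a minimal assumption (Lipschitzness in $\ell^1$, or equivalently coordinate-wise Lipschitzness with summable constants) that both captures all natural $k$-NN density estimators used in practice and delivers the desired bound. A secondary subtlety is verifying rigorously that a single buffer swap changes the sorted top-$k$ distance vector in at most one coordinate; this is a short combinatorial check using the fact that $\vy$ being outside the top-$k$ of $\mathcal{X}$ means it does not appear in the $k$-NN set, and $\vy'$ entering the top-$k$ of $\mathcal{X}'$ can evict only the previous $k$-th neighbor.
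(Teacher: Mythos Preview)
First, note that this statement is labeled a \emph{conjecture} in the paper; the paper offers no proof, only the informal motivation in the preceding paragraph. So there is no paper proof to compare against.

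More substantively, you have read ``$O(W)$-uniformly stable'' as an upper bound on the stability constant and set out to prove that any Lipschitz aggregation of $k$-NN distances is at most $CW$-stable. But the surrounding discussion makes clear the authors intend the opposite direction: they write that $k$-NN estimators ``will inevitably result in poor uniform stability bounds,'' that the total $k$-NN distance ``can change by $W$ in the worst case,'' and that a proof ``would imply \ldots\ unbounded uniform stability in unbounded feature spaces.'' All of this points to a \emph{lower bound} --- the best achievable stability constant is $\Omega(W)$, so one cannot escape the $W$-dependence of \autoref{thm:ns_stability} by choosing a cleverer aggregation $\Psi$. Your argument addresses only the upper-bound half, which is the easier direction and not what the conjecture is meant to assert. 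A lower-bound proof would instead need to exhibit, for every $\Psi$ in a suitable nondegenerate class, a buffer configuration and a single swap that forces a change of order $W$ in the output.

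Finally, even as an upper-bound argument, your case (b) is incomplete. You treat only the sub-case where the removed element $\vy$ lies \emph{outside} the current $k$-NN set. When $\vy$ is \emph{inside} the set at rank $j$, deleting it promotes the $(k{+}1)$-th neighbor and shifts every sorted distance at ranks $j,j{+}1,\dots,k$; the sorted vectors $\vd,\vd'$ then differ in up to $k-j+1$ coordinates, not one. (You are conflating ``the $k$-NN \emph{set} changes by one element'' with ``the sorted vector changes in one coordinate.'') Your conclusion $\lVert\vd-\vd'\rVert_1 \le W$ is nonetheless correct, but it needs a different justification: by monotonicity of order statistics, swapping a single value $\delta \mapsto \delta'$ in the underlying multiset moves every coordinate of the sorted top-$k$ vector in the same direction, so $\lVert\vd-\vd'\rVert_1$ equals the change in the \emph{sum} of the $k$ smallest values, which is $1$-Lipschitz in $\delta$ and hence bounded by $|\delta-\delta'|\le W$.
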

A proof of this conjecture would imply that density estimators based on $k$-nearest neighbor metric have unbounded uniform stability in unbounded feature spaces, for $k < |\gB|$.
Thus, the potential for rapid changes in the optimization landscape makes such estimators incompatible with adaptive optimizers.

\section{Experiments} \label{sec:experiments}
We compare the performance of DDS-KDE and DDS-CNF with the QD algorithms MAP-Elites (line), CMA-ME, and CMA-MAE, and with NS using CMA-ES as the underlying optimizer.
MAP-Elites (line) refers to the implementation of MAP-Elites with the Iso+LineDD operator~\citep{vassiliades2018discovering}.
All algorithms are implemented with pyribs \citep{pyribs}.

Our experiments include canonical benchmark domains from QD and DO: Linear Projection~\citep{fontaine2020covariance}, Arm Repertoire~\citep{vassiliades2018discovering}, and Deceptive Maze~\citep{lehman2011abandoning}.
As discussed in~\autoref{sec:problem}, we convert QD domains into DO domains by setting the objective to be constant, effectively removing the importance of solution quality.
For Deceptive Maze, we use the implementation in Kheperax~\citep{grillotti2023kheperax}.
Furthermore, to address the challenges posed by high-dimensional feature spaces, we introduce a new domain, Multi-feature Linear Projection, that generalizes Linear Projection to high-dimensional feature spaces. 

All experiments run on a 128-core workstation with an NVIDIA RTX A6000 GPU. While all algorithms are single-threaded, we use the GPU for training the CNF in DDS-CNF and for evaluating the Deceptive Maze domain.

\subsection{Experimental Design}\label{sec:design}

\noindent\textbf{Independent Variable.} In each domain, we conduct a between-groups study with the algorithm as the independent variable.

\noindent\textbf{Dependent Variables.} We compute the archive coverage as the number of occupied cells in the archive divided by the total number of cells.
To compare the coverage of tessellation-based algorithms (CMA-MAE, CMA-ME, and MAP-Elites) with that of non-tessellation-based algorithms (DDS-KDE, DDS-CNF, and NS), we track the coverage of all algorithms on a passive archive $\mathcal{A}$ tessellated by a $100 \times 100$ grid. %
For the high-dimensional feature space experiment, we track coverage with a centroidal Voronoi tessellation with 10,000 cells~\citep{vassiliades2018using}.

Following prior work~\citep{gomes15devising}, we also assess the ability of each algorithm to uniformly explore the feature space. 
Thus, we measure the cross-entropy between a uniform distribution and the distribution of sampled features.
Let $N_e$ be the total times throughout the entire search that solutions were discovered in cell $e \in \{1, \dots, l\}$ in the passive archive, and $N_\mathrm{total} = \sum_{e=1}^l N_e$.
The cross-entropy score is defined as:
\begin{equation}
    \mathrm{CE} = -\sum_{e=1}^l \frac{1}{l}\log\left(\frac{N_e}{N_\mathrm{total}}\right)
\end{equation}
$\mathrm{CE}$ achieves its minimum value when $N_e$ is uniformly distributed across all cells $e \in \{1, \dots, l\}$. For all passive archives in our experiments, this minimum is 9.21 for $l = \num[group-separator={,}]{10000}$ cells.

\subsection{Domain Details} \label{sec:domain}

\textbf{Linear Projection (LP).} LP is a QD domain where $\vphi$ induces high distortion by mapping an $n$-dimensional solution space to a 2D feature space~\citep{fontaine2020covariance}.
A harsh penalty is applied outside the bounds of the feature space to hinder exploration near the bounds.
The feature function $\vphi: \R^n \to \left[-5.12 \cdot \frac{n}{2}, 5.12 \cdot \frac{n}{2}\right]^2$ is defined as:
\begin{align}
    \vphi(\vtheta) &= \left(\sum_{i=1}^{\frac{n}{2}}\mathrm{clip}(\theta_i), \sum_{i=\frac{n}{2} + 1}^{n}\mathrm{clip}(\theta_i)\right) \label{eq:linproj2d} \\
    \mathrm{clip}(\theta_i) &= 
    \begin{cases}
        \theta_i &\text{if}\; |\theta_i| \le 5.12\\
        5.12/\theta_i & \text{otherwise} \\
    \end{cases}
\end{align}
where $\theta_i$ is the $i$th component of $\vtheta$, and we assume that $n$ is divisible by $2$.
$\vphi(\vtheta)$ applies $\mathrm{clip}(\cdot)$ to each $\theta_i$ and sums the two halves of $\vtheta$.
Since $\mathrm{clip}(\theta_i)$ restricts $\theta_i$ to the interval $[-5.12, 5.12]$, $\vphi(\vtheta)$ is bounded by the closed interval $\left[-5.12 \cdot \frac{n}{2}, 5.12 \cdot \frac{n}{2}\right]^2$.

For DO experiments, we set the objective function $f(\bm{\theta}) = 1$.
Following prior work~\citep{fontaine2021differentiable, fontaine2023covariance}, we let the solution dimension be $n = 100$.

\noindent\textbf{Arm Repertoire.} The goal of Arm Repertoire~\citep{cully2017quality, vassiliades2018discovering} is to search for a diverse collection of arm positions for a planar robotic arm with $n$ revolving joints.
In this domain, $\vtheta \in [-\pi, \pi]^n$ represents the angles of the $n$ joints, and $\vphi(\vtheta)$ computes the $(x, y)$ position of the arm's end-effector using forward kinematics.
While all other domains in this paper have a maximum of $100\%$ coverage, Arm Repertoire has a maximum coverage of $80.24\%$ when using a $100 \times 100$ grid archive~\citep{fontaine2023covariance}, since the arm can only move in a circle of radius $n$.
Similar to LP, we set $f(\vtheta) = 1$ and $n = 100$.

\begin{figure}
\centering
\begin{subfigure}{0.48\linewidth}
\includegraphics[width=\linewidth]{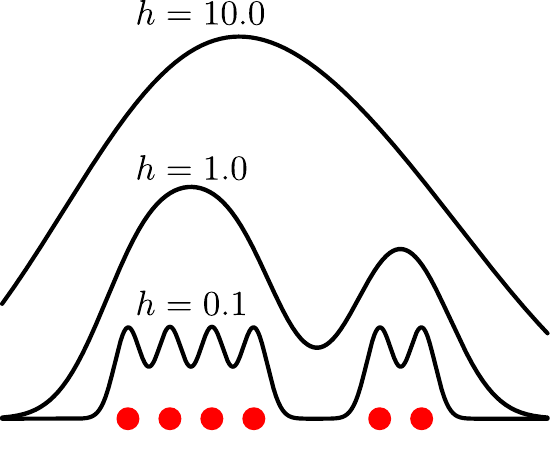}
\caption{The effect of the bandwidth $h$ on a one-dimensional KDE. Red dots represent the data, and black lines depict the KDE. 
When $h$ is too small, the KDE reveals many misleading local maxima.
When $h$ is too large, the KDE conceals modes from the underlying distribution.}
\label{fig:bandwidth}
\end{subfigure}
\hfill
\begin{subfigure}{0.48\linewidth}
\centering
\includegraphics[width=0.9\linewidth]{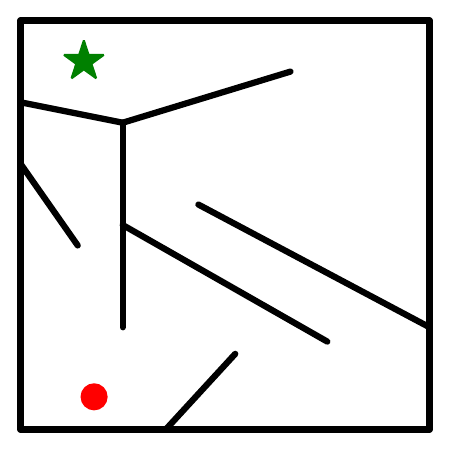}
\caption{The layout in the deceptive maze domain. The red dot indicates the start, and the green dot indicates the goal. While there is a goal, DO seeks to find agents that reach every position in the maze.
Adapted from prior work~\citep{lehman2011abandoning}. 
}
\label{fig:maze}
\end{subfigure}
\caption{}
\label{fig:mazeandbandwidth}
\end{figure}

\noindent\textbf{Deceptive Maze.} Deceptive Maze is a DO domain that challenges the algorithm to discover a diverse set of final positions for robots navigating a maze (\autoref{fig:maze})~\citep{lehman2011abandoning}.
In this domain, $\vtheta$ parameterizes the robot's neural network controller.
$\vphi(\vtheta)$ is the final position of the robot after evaluating its path in the maze.
As a DO domain, this has no objective function.
In our experiments, the neural network is a MLP with $n=66$ parameters.

\noindent\textbf{Multi-feature Linear Projection (Multi-feature LP).}
We introduce a generalized version of LP that scales to $m$-dimensional feature spaces.
Assuming that $n$ is divisible by $m$, let $r = \frac{n}{m}$.
The feature function $\vphi: \R^n \to \left[-5.12 \cdot r, 5.12 \cdot r \right]^m$ is defined as
\begin{equation}
    \vphi(\vtheta) = \left(\sum_{i=jr + 1}^{(j+1)r}\mathrm{clip}(\theta_i) : j \in \{0, \dots, m-1\}\right) \label{eq:multifeature}
\end{equation}
When $m=2$, this is equivalent to LP definition in \autoref{eq:linproj2d}.
Our experiments use $n=100$ and $m=10$.

\subsection{Analysis}\label{sec:analysis}

\begin{figure*}
  \includegraphics[width=\linewidth]{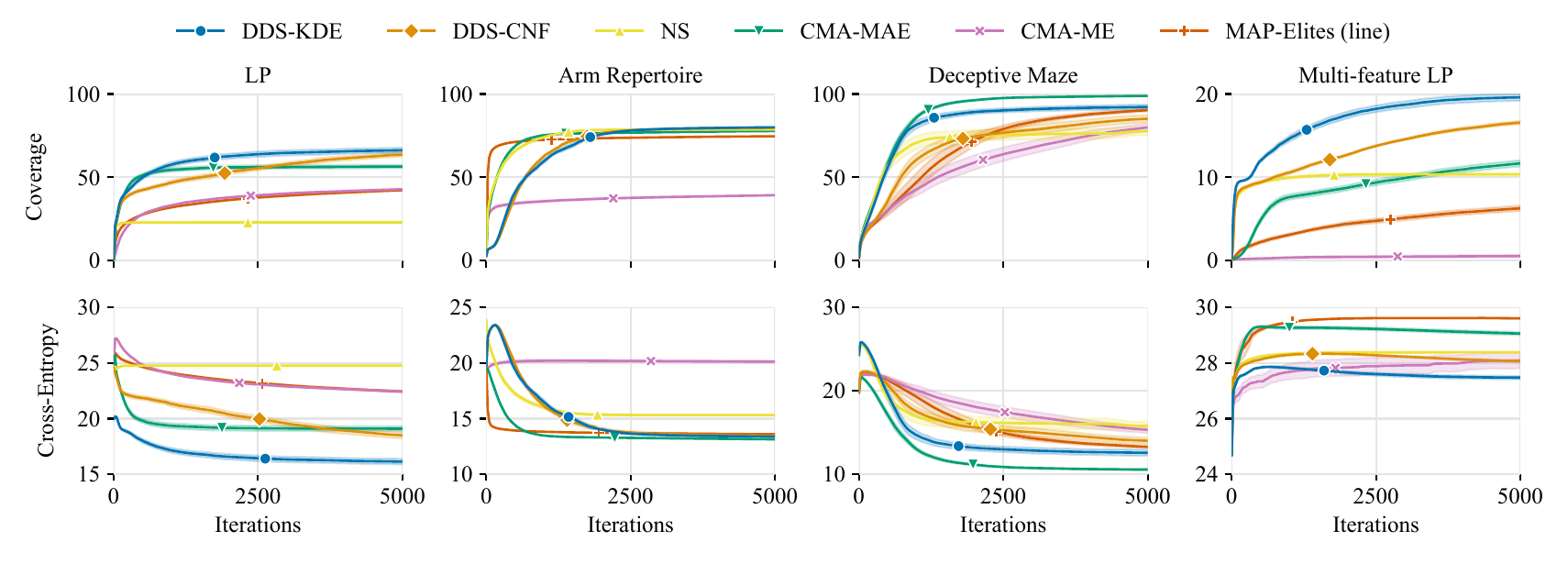}
  \centering
  \setlength{\tabcolsep}{3.6pt}
  \fontsize{7.8}{7.1}\selectfont
  \begin{tabular}{lrrrrrrrr}
  \toprule
    & \multicolumn{2}{c}{LP} & \multicolumn{2}{c}{Arm Repertoire} & \multicolumn{2}{c}{Deceptive Maze} & \multicolumn{2}{c}{Multi-feature LP} \\
  \cmidrule(r){2-3}
  \cmidrule(r){4-5}
  \cmidrule(r){6-7}
  \cmidrule(r){8-9}
   & Coverage & Cross-Entropy & Coverage & Cross-Entropy & Coverage & Cross-Entropy & Coverage & Cross-Entropy \\
  \midrule
  DDS-KDE & {\bf67.67 \textpm 2.13\%} & {\bf17.57 \textpm 0.41} & {\bf80.22 \textpm 0.01\%} & 14.14 \textpm 0.02 & 94.93 \textpm 0.84\% & 11.84 \textpm 0.21 & {\bf50.22 \textpm 0.45\%} & {\bf22.20 \textpm 0.09} \\
  DDS-CNF & 63.65 \textpm 1.38\% & 18.50 \textpm 0.29 & 79.82 \textpm 0.19\% & 13.48 \textpm 0.12 & 85.17 \textpm 2.28\% & 14.01 \textpm 0.38 & 16.57 \textpm 0.22\% & 28.07 \textpm 0.03 \\
  NS & 22.96 \textpm 1.59\% & 24.78 \textpm 0.06 & 78.67 \textpm 0.23\% & 15.32 \textpm 0.07 & 77.70 \textpm 2.47\% & 15.78 \textpm 0.50 & 10.36 \textpm 0.16\% & 28.39 \textpm 0.03 \\
  CMA-MAE & 56.47 \textpm 1.06\% & 19.09 \textpm 0.23 & 77.77 \textpm 0.10\% & {\bf13.14 \textpm 0.01} & {\bf98.83 \textpm 0.18\%} & {\bf10.55 \textpm 0.06} & 11.68 \textpm 0.37\% & 29.06 \textpm 0.06 \\
  CMA-ME & 42.90 \textpm 0.25\% & 22.45 \textpm 0.05 & 39.29 \textpm 0.24\% & 20.13 \textpm 0.09 & 79.97 \textpm 2.17\% & 15.33 \textpm 0.39 & 0.56 \textpm 0.14\% & 28.10 \textpm 0.28 \\
  MAP-Elites (line) & 42.30 \textpm 0.31\% & 22.43 \textpm 0.07 & 74.62 \textpm 0.07\% & 13.60 \textpm 0.01 & 90.32 \textpm 0.87\% & 13.28 \textpm 0.16 & 6.28 \textpm 0.34\% & 29.61 \textpm 0.02 \\
  \bottomrule
  \end{tabular}

  \caption{Coverage and cross-entropy (CE) after 5,000 iterations of each algorithm in all domains. We report the mean over 10 trials, with error bars showing the standard error of the mean. Higher coverage and lower cross-entropy are better.}
  \label{fig:results}
\end{figure*}

\autoref{fig:results} shows the mean coverage and cross-entropy over 10 trials.
In each domain, we conducted a one-way ANOVA to check if the algorithms differed in their coverage and cross-entropy.
Since all ANOVAs were significant ($p < 0.001$), we followed up with pairwise comparisons via Tukey's HSD test.
We include full statistical analysis in \autoref{sec:stats}.

\noindent\textbf{Coverage.}
DDS-KDE and DDS-CNF outperform all baselines on LP, Arm Repertoire, and Multi-feature LP.
They exhibit no statistical difference in performance with the best-performing algorithm on Deceptive Maze (CMA-MAE).
Notably, DDS-KDE solves Arm Repertoire nearly perfectly, as the maximum coverage in the domain is $80.24\%$.

We attribute the high coverage of DDS-KDE and DDS-CNF on these domains to the continuity of their density estimate, which prevents DDS-KDE from converging prematurely.
For example, on LP, our algorithms discover more solutions near the edges of the feature space than CMA-MAE (\autoref{sec:heatmaps}).
The passive archive maintained by CMA-MAE converges as all the solutions fall into previously discovered cells~\citep{fontaine2020covariance}.
In contrast, the continuity of our density estimate and the online buffer updates facilitate DDS algorithms to achieve slow but continual progress in exploring the feature space (\autoref{fig:results}, LP and Arm Repertoire).
This is because the shape of the continuous density estimate always changes slightly after each iteration of the algorithm.

For multi-feature LP, we observe that algorithms leveraging continuous representations of the feature space, i.e., DDS and NS, explore the feature space much faster than other algorithms driven by discrete feature space representations, e.g., CMA-MAE (\autoref{fig:results}).
This is because CMA-MAE is optimizing on a centroidal Voronoi tessellation with 10,000 cells, where each cell has significantly more volume compared to that of a $100 \times 100 $ grid due to the increased dimensionality of the feature space.
Hence, more solutions map to the same cells, making it more difficult to find solutions outside of previously explored cells.

A similar phenomenon was observed in prior work when increasing the dimensionality of the solution space in the LP domain~\citep{fontaine2023covariance}.
Increased solution dimensionality more heavily distorts the feature mapping and, similarly, causes most solutions to fall in the same cells of the feature space.
Consequently, the LP domain becomes exceptionally challenging for QD algorithms working with a tessellation (like CMA-MAE), as there is insufficient signal to drive the algorithm towards the boundaries of the feature space.

DDS-KDE circumvents this limitation of tessellations with its continuous density estimation of the feature space.
While in tesselation-based algorithms, solutions will fall into the same cell, DDS-KDE will retain the solutions in its buffer, which generates signal to drive the search towards the feature space boundaries.
Thus, DDS-KDE is more resilient to the distortions of the feature mapping and can better scale with the dimensionality of the feature space.

\noindent\textbf{Cross-Entropy.}
DDS-KDE and DDS-CNF outperform\footnote{Recall that lower CE is better as it indicates a more uniform exploration of the feature space (\autoref{sec:design})} all baselines in LP, with the exception of DDS-CNF, whose performance is not significantly different from CMA-MAE.
On Arm Repertoire and Deceptive Maze, DDS-KDE and DDS-CNF outperform NS.
However, while DDS-KDE is on par with CMA-MAE on Arm Repertoire , CMA-MAE outperforms DDS-KDE on Deceptive Maze and DDS-CNF on both Arm Repertoire and Deceptive Maze.
Finally, DDS-KDE outperforms all algorithms on Multi-feature LP, while DDS-CNF outperforms CMA-MAE and MAP-Elites (line).

We attribute the improved performance of CMA-MAE to the nature of the cross-entropy metric.
Cross-entropy is intended to approximate the distributional similarity between the exploration of the feature space and the uniform distribution.
CMA-MAE directly optimizes a passive archive with uniform tessellation, unlike DDS and NS, which are agnostic to the passive archive.
This experimental setup naturally favors CMA-MAE with respect to the cross-entropy metric.

DDS-KDE performs as well as DDS-CNF in terms of coverage and cross-entropy across most domains.
However, on Multi-feature LP, DDS-KDE outperforms DDS-CNF on both metrics; on LP and Deceptive Maze, DDS-KDE outperforms DDS-CNF in terms of cross-entropy.

We attribute the difference between the performance of DDS-KDE and DDS-CNF to the bandwidth parameter in KDE, which allows us to adjust the smoothness of the KDE (\autoref{fig:bandwidth}). On the other hand, CNF lacks explicit control over its smoothness. 
Thus, while we adjust the smoothness of the KDE to improve performance for DDS-KDE (\autoref{sec:connection}), we can not apply the same techniques to boost the performance of DDS-CNF.

\section{Limitations and Future Work} \label{sec:future}

We introduce a new diversity optimization algorithm with two variants, DDS-KDE and DDS-CNF.
We provide theoretical insight into the connection between DDS-KDE and novelty search and show that both DDS algorithms excel at discovering diverse solutions.

We envision several extensions to DDS.
While we proposed two DDS variants, DDS is a general method that can integrate with a wide variety of distribution fitting techniques.
DDS can be combined with parametric estimators like mixture models~\citep{mclachlan2019finite} and moment matching~\citep{han2018local}, as well as with non-parametric estimators like vector quantization~\citep{kohonen1990improved} and generative models~\citep{goodfellow2020generative}.
In general, we believe that DDS will interface with any method that learns a continuous density representation of the feature space.

For DDS-KDE, while we assumed a constant bandwidth, prior work has adapted the bandwidth online~\citep{kristan2011multivariate} and varied the bandwidth based on query location~\citep{terrell1992variable}.
These methods could be applied to secure tighter uniform stability bounds for the KDE, making feature space exploration more efficient.

A key limitation of our algorithms is their significant computational cost.
Querying the KDE incurs a runtime of $O(|\gB|)$, where $|\gB|$ is the buffer size, and fine-tuning the CNF is computationally expensive due to backpropagation, even when accelerated on a GPU.
These issues could be mitigated by accelerating KDE computation~\citep{obrien2016fast, yang2003improved} and improving CNF training efficiency~\citep{onken2021ot, huang2021accelerating}.

We proposed a generalization of the standard linear projection domain to higher dimensional feature spaces, and demonstrate that DDS exhibits superior performance to existing algorithms.
Our results could be further strengthened by evaluating DDS's performance on more complex domains.
For example, we could evaluate the performance of DDS on locomotion tasks, where the solutions are neural network controllers and the task is to control a multi-pedal walker to move efficiently~\citep{Sharma2020Dynamics-Aware, rakicevic2021policy}.

Finally, we are excited to see how the underlying insight of DDS --- leveraging continuous density representations of the feature space --- can improve the exploration power of QD algorithms, especially in high-dimensional feature spaces.
Applying DDS to QD will improve its performance on applications such as scenario generation for complex systems~\citep{fontaine2021importance}, generative design~\citep{gaier2018data}, damage recovery in robotics~\citep{cully2015robots}, reinforcement learning~\citep{batra2023proximal}, and procedural content generation~\citep{gravina2019procedural}.

\begin{acks}
The authors would like to thank the anonymous referees for
their valuable comments and helpful suggestions.
This work was supported by the \grantnum{NSF}{NSF NRI (\#2024949)}, \grantnum{NSF}{NSF GRFP (\#DGE-1842487)}, \grantnum{NSF}{NSF CAREER (\#2145077)}, and the NVIDIA Academic Hardware Grant.

\end{acks}

\bibliographystyle{ACM-Reference-Format}
\bibliography{references}

\appendix

\onecolumn
\section{Hyperparameters} \label{sec:hyperparameters}
We provide an overview of the hyperparameters of our experiments.
\autoref{tab:hyperparameters} lists hyperparameters that varies across algorithms and domains.
Hyperparameters that are fixed for the algorithms are described below. 

Following prior work~\citep{fontaine2020covariance}, all algorithms run with 15 emitters --- emitters~\citep{pyribs} are objects that generate solutions for QD algorithms.
For MAP-Elites (line), the emitter applies the Iso+LineDD operator to random solutions from the archive~\citep{vassiliades2018discovering}.
For other algorithms, the emitters are individual instances of CMA-ES.
In each experiment, we run the algorithm for 5000 iterations, where each emitter generates $\lambda = 36$ solutions per iteration.
Following prior work~\citep{fontaine2023covariance}, CMA-ES is configured with the $\mu$ selection rule, \texttt{basic} restart rule, and an initial solution $\vx_0 = \vzero$.

For LP, Arm Repertoire, and Deceptive Maze, MAP-Elites-based algorithms (CMA-MAE, CMA-ME, and MAP-Elites (line)) run on a grid archive of $100 \times 100$ following~\citep{fontaine2023covariance}.
For Multi-feature LP, since maintaining a grid tessellation would require an exponential number of cells in the number of dimensions, we use a CVT archive with 10,000 cells.
Due to randomness in the generation of the CVT cells, we pre-generate the centroids and use the same centroids for all algorithms. 

Our DDS algorithms (DDS-KDE and DDS-CNF) maintain a buffer of capacity $b = \num[group-separator={,}]{10000}$.
DDS-KDE uses a Gaussian kernel with a bandwidth parameter of $h = 25.6, 10, 0.01, 2.56$ for Linear Projection, Arm Repertoire, Deceptive Maze, and Multi-feature LP, respectively.
The bandwidth is found through empirical fine-tuning (\autoref{sec:bandwidth}).

\section{Bandwidth Selection for DDS-KDE} \label{sec:bandwidth}
The bandwidth parameter $h$ in KDE controls the smoothness of the estimation (\autoref{fig:bandwidth}).
While prior work~\citep{silverman1986density, rudemo1982empirical, bowman1984alternative} has developed bandwidth selection methods for accurate density estimations, our theoretical results show that the optimal bandwidth for DDS-KDE may require an over-smoothing of the KDE (\autoref{sec:connection}).

To examine the effect of the bandwidth parameter on the performance of DDS-KDE, we perform ablation experiments on LP, Arm Repertoire, and Deceptive Maze.
While applying the same hyperparameters as described in \autoref{sec:hyperparameters}, we vary the bandwidth $h \in (0, W]$, where $W$ is the bound of the feature space defined for each domain (\autoref{sec:domain}).
For instance, in the LP domain, $W_\mathrm{LP} = 5.12 \cdot \frac{n}{2}$ where $n$ is the dimensionality of the solution space.
For conciseness and consistency, we report the results in terms of the normalized bandwidth $h_0 = \frac{h}{W_\mathrm{domain}}$ such that $h_0 \in (0,1]$. $W_\mathrm{domain}$ denotes the feature space bound for the particular domain.

We observe a performance peak across all domains around $h_0 = 0.05$ (\autoref{fig:bandwidth_ablation}).
For LP, Deceptive Maze, and Multi-feature LP, DDS-KDE's performance exhibits diminishes as the bandwidth increases for $h_0 \ge 0.05$.
For Arm Repertoire, DDS-KDE achieves near-optimal coverage for all $h_0 \ge 0.05$.

\begin{figure*}
  \includegraphics[width=\linewidth]{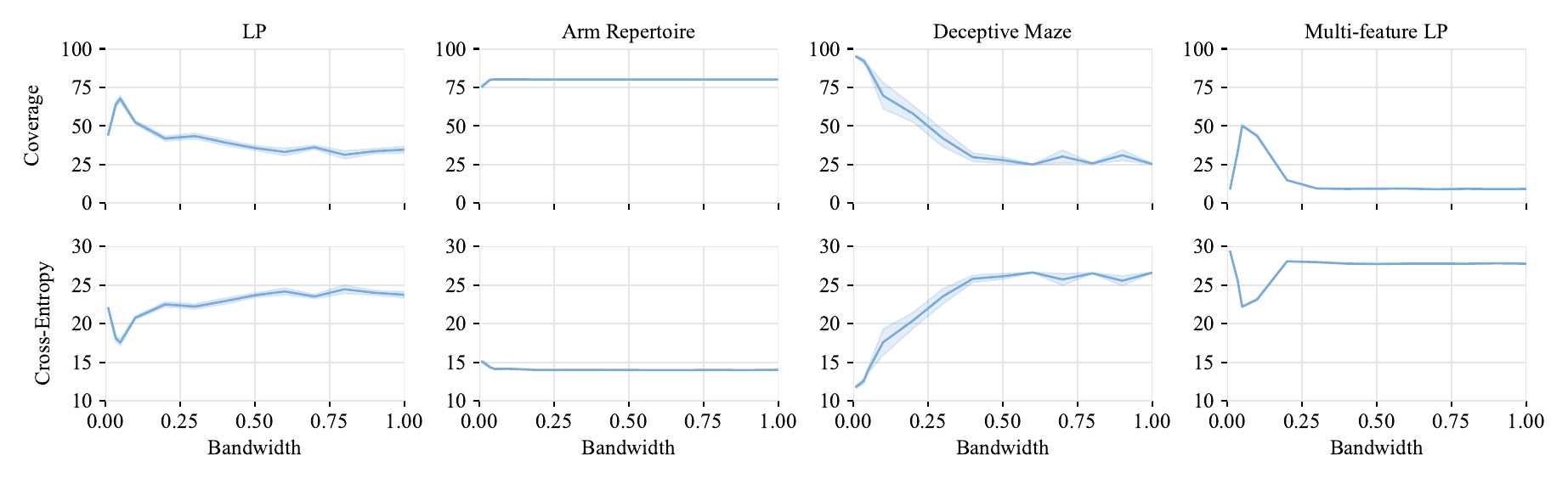}
  \centering
  \setlength{\tabcolsep}{3.6pt}
  \fontsize{7.8}{7.1}\selectfont
  \begin{tabular}{lrrrrrrrr}
  \toprule
   & \multicolumn{2}{c}{LP} & \multicolumn{2}{c}{Arm Repertoire} & \multicolumn{2}{c}{Deceptive Maze} & \multicolumn{2}{c}{Multi-feature LP} \\
   \cmidrule(r){2-3}
   \cmidrule(r){4-5}
   \cmidrule(r){6-7}
   \cmidrule(r){8-9}
   & Coverage & Cross-Entropy & Coverage & Cross-Entropy & Coverage & Cross-Entropy & Coverage & Cross-Entropy \\
  \midrule
  $h_0 = 0.01$ & 44.57 \textpm 0.75\% & 21.99 \textpm 0.14 & 75.75 \textpm 1.06\% & 15.07 \textpm 0.22 & \hl{\bf94.93 \textpm 0.84\%} & \hl{\bf11.84 \textpm 0.21} & 9.50 \textpm 0.15\% & 29.33 \textpm 0.02 \\
  $h_0 = 0.035$ & 64.00 \textpm 1.82\% & 18.10 \textpm 0.37 & 79.84 \textpm 0.20\% & 14.39 \textpm 0.09 & 92.22 \textpm 1.44\% & 12.56 \textpm 0.37 & 33.17 \textpm 0.26\% & 25.57 \textpm 0.05 \\
  $h_0 = 0.05$ & \hl{\bf67.67 \textpm 2.13\%} & \hl{\bf17.57 \textpm 0.41} & \hl{\bf80.22 \textpm 0.01\%} & \hl{14.14 \textpm 0.02} & 87.70 \textpm 1.60\% & 13.98 \textpm 0.35 & \hl{\bf50.22 \textpm 0.45\%} & \hl{\bf22.20 \textpm 0.09} \\
  $h_0 = 0.1$ & 52.41 \textpm 1.18\% & 20.75 \textpm 0.22 & 80.20 \textpm 0.01\% & 14.14 \textpm 0.04 & 69.63 \textpm 8.59\% & 17.60 \textpm 1.69 & 43.61 \textpm 0.81\% & 23.13 \textpm 0.15 \\
  $h_0 = 0.2$ & 41.90 \textpm 1.69\% & 22.50 \textpm 0.33 & 80.15 \textpm 0.00\% & 13.99 \textpm 0.02 & 57.95 \textpm 5.43\% & 20.41 \textpm 1.03 & 14.88 \textpm 0.18\% & 28.08 \textpm 0.02 \\
  $h_0 = 0.3$ & 43.48 \textpm 1.79\% & 22.22 \textpm 0.32 & 80.10 \textpm 0.01\% & 14.01 \textpm 0.02 & 41.98 \textpm 5.51\% & 23.54 \textpm 0.98 & 9.53 \textpm 0.12\% & 27.97 \textpm 0.03 \\
  $h_0 = 0.4$ & 39.37 \textpm 2.00\% & 22.92 \textpm 0.40 & 80.10 \textpm 0.01\% & 14.00 \textpm 0.01 & 29.77 \textpm 2.80\% & 25.81 \textpm 0.47 & 9.24 \textpm 0.09\% & 27.78 \textpm 0.03 \\
  $h_0 = 0.5$ & 35.74 \textpm 1.49\% & 23.68 \textpm 0.28 & 80.09 \textpm 0.00\% & 13.99 \textpm 0.02 & 27.90 \textpm 2.12\% & 26.14 \textpm 0.37 & 9.39 \textpm 0.19\% & 27.74 \textpm 0.04 \\
  $h_0 = 0.6$ & 33.19 \textpm 2.39\% & 24.17 \textpm 0.44 & 80.10 \textpm 0.01\% & 13.99 \textpm 0.01 & 25.04 \textpm 0.24\% & 26.63 \textpm 0.03 & 9.42 \textpm 0.11\% & 27.78 \textpm 0.02 \\
  $h_0 = 0.7$ & 36.20 \textpm 1.31\% & 23.52 \textpm 0.28 & 80.09 \textpm 0.01\% & 13.99 \textpm 0.01 & 30.23 \textpm 4.19\% & 25.73 \textpm 0.77 & 9.02 \textpm 0.16\% & 27.78 \textpm 0.03 \\
  $h_0 = 0.8$ & 31.35 \textpm 2.65\% & 24.46 \textpm 0.54 & 80.09 \textpm 0.00\% & 14.00 \textpm 0.01 & 25.74 \textpm 0.32\% & 26.53 \textpm 0.06 & 9.29 \textpm 0.17\% & 27.77 \textpm 0.04 \\
  $h_0 = 0.9$ & 33.66 \textpm 1.61\% & 24.00 \textpm 0.30 & 80.10 \textpm 0.00\% & {\bf13.98 \textpm 0.01} & 31.08 \textpm 3.55\% & 25.56 \textpm 0.64 & 9.13 \textpm 0.18\% & 27.82 \textpm 0.04 \\
  $h_0 = 1$ & 34.69 \textpm 2.13\% & 23.75 \textpm 0.41 & 80.10 \textpm 0.00\% & 14.02 \textpm 0.01 & 25.34 \textpm 0.27\% & 26.61 \textpm 0.04 & 9.23 \textpm 0.13\% & 27.77 \textpm 0.03 \\
  \bottomrule
  \end{tabular}
  \caption{Coverage and cross-entropy (CE) after 5,000 iterations of DDS-KDE in all domains for each normalized bandwidth $h_0$. We report the mean over 10 trials (3 trials for Deceptive Maze) with error bars showing the standard error of the mean. Higher coverage and lower cross-entropy are better. \hl{Highlighted} cells are results from the main experiments in \autoref{fig:results}. The plots show the normalized bandwidth on the $x$-axis}
  \label{fig:bandwidth_ablation}
\end{figure*}

\begin{table}
\centering
\caption{Detailed hyperparameter specifications for each algorithm and domain.}\label{tab:hyperparameters}
\begin{tabular}{llcccc}
\toprule
Algorithm & Parameter & LP & Arm Repertoire & Deceptive Maze & Multi-feature LP \\
\toprule
\multirow{2}{*}{DDS-KDE} & Initial step size $\sigma_0$ & 1.5 & 0.5 & 1.5 & 1.5 \\
                         & Bandwidth $h$ & 25.6 & 10 & 0.01 & 2.56 \\
\midrule
\multirow{1}{*}{DDS-CNF} & Initial step size $\sigma_0$ & 0.5 & 1.5 & 1.5 & 0.5 \\
\midrule
\multirow{3}{*}{NS} & Initial step size $\sigma_0$ & 0.5 & 0.5 & 0.5 & 0.5 \\
                    & Nearest neighbors $k$ & 100 & 100 & 100 & 100 \\
                    & Acceptance threshold & $0.042 \cdot 512$ & $0.042 \cdot 200$ & $0.042$ & $0.042 \cdot 51.2$ \\
\midrule
\multirow{3}{*}{CMA-MAE} & Initial step size $\sigma_0$ & 0.5 & 0.2 & 0.5 & 0.5 \\
                         & Threshold min & 0 & 0 & 0 & 0 \\
                         & Learning rate $\alpha$ & 0.01 & 0.01 & 0.01 & 0.01 \\
\midrule
\multirow{1}{*}{CMA-ME} & Initial step size $\sigma_0$ & 0.5 & 0.2 & 0.5 & 0.5 \\
\midrule
\multirow{2}{*}{MAP-Elites (line)} & Isotropic variance $\sigma_\mathrm{iso}$ & 0.5 & 0.1 & 0.1 & 0.5 \\
                                   & Line variance $\sigma_\mathrm{line}$ & 0.2 & 0.2 & 0.2 & 0.2 \\
\bottomrule
\end{tabular}
\end{table}

\section{Conversion Formula for Affine Transformations of the Feature Space} \label{sec:affine}

We derive a formula that allows one to empirically find an optimal bandwidth on a domain, then apply the formula to compute the optimal bandwidths for any affine transformations of that domain.

Consider feature mappings $\vphi_1(\cdot)$ and $\vphi_2(\cdot)$.
Let $\vphi_2(\cdot)$ be an affine transformation of $\vphi_1(\cdot)$, i.e., $\vphi_2(\vx) = \mA\vphi_1(\vx) + \vb$.
Furthermore, assume there is a KDE $\hat{D}_{h_1}(\cdot)$ over $\vphi_1(\cdot)$.
We prove that when $\mA$ is some constant $c$, $\hat{D}_{h_2}$ on $\vphi_2(\cdot)$ where $h_2 = ch_1$ is an identical density function as $\hat{D}_{h_1}(\cdot)$, in expectation.

\begin{theorem}{\label{thm:bandwidth_selection}}
Consider $\hat{D}_{h_0, \mX}(\vx; \mathcal{X})$ and $\hat{D}_{h_1, \mY}(\vy; \mathcal{Y})$, where $\hat{D}_{h_0, \mX}$ is a KDE with bandwidth $h_0$ modeling the density of a random vector $\mX$ and $\hat{D}_{h_1, \mY}$ is a KDE with bandwidth $h_1$ modeling the density of a random vector $\mY$. If $\mY = a\mX + \vb$, then
$\hat{D}_{h_0, \mX}(\vx) = \hat{D}_{h_1, \mY}(\vy)$ when $h_0 = ah_1$.
\end{theorem}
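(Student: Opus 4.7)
The plan is to prove \autoref{thm:bandwidth_selection} by a direct change-of-variables argument applied to the KDE formula of \autoref{eq:KDE}. The first step is to record that the affine map $\vx \mapsto a\vx + \vb$ is a bijection between $\mathcal{X}$ and $\mathcal{Y}$, so $|\mathcal{Y}| = |\mathcal{X}|$ and every sum over $\vy' \in \mathcal{Y}$ appearing in $\hat{D}_{h_1,\mY}(\vy;\mathcal{Y})$ can be reindexed as a sum over the corresponding $\vx' \in \mathcal{X}$ with $\vy' = a\vx' + \vb$. The second step is to evaluate $\hat{D}_{h_1,\mY}$ at the matching query point $\vy = a\vx + \vb$ and apply homogeneity of the norm, yielding $\lVert \vy - \vy'\rVert = a\,\lVert \vx - \vx'\rVert$ for $a > 0$.

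Combining these two reductions turns $\hat{D}_{h_1,\mY}(\vy;\mathcal{Y})$ into a sum over $\vx' \in \mathcal{X}$ whose kernel argument is $\tfrac{a\,\lVert \vx - \vx'\rVert}{h_1}$ and whose prefactor is $\tfrac{1}{|\mathcal{X}|h_1}$. Matching these against the corresponding terms of $\hat{D}_{h_0,\mX}(\vx;\mathcal{X})$ is exactly what the bandwidth relation tying $h_0$, $h_1$, and $a$ accomplishes: once the factor $a$ from the norm is absorbed into the bandwidth, the kernel arguments coincide term by term, and the two density estimates agree (up to at most a global positive rescaling coming from the $1/h$ prefactor, which matches the ``in expectation'' hedge in the paragraph preceding the theorem and which does not alter the density ranking consumed by CMA-ES on \autoref{dds:ranking} of \autoref{alg:dds}).

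I do not anticipate a serious obstacle; the argument is essentially a one-line symbolic manipulation once the bijection between $\mathcal{X}$ and $\mathcal{Y}$ is written down. The points that need care are (i) confirming the sample-set bijection so that the sums really are termwise comparable, (ii) tracking the sign and direction of the factor $a$ in the bandwidth relation so that the kernel arguments align as claimed, and (iii) noting that because DDS-KDE uses $\hat{D}$ only through the ranking of a sampled batch, any residual global multiplicative constant is absorbed by the algorithm. Practically, this is what lets a bandwidth tuned on one domain be transferred to any affinely rescaled variant of that domain, which is the use to which the theorem is put.
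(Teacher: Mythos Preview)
Your approach is correct and is genuinely different from the paper's. You argue by direct substitution: set up the bijection $\vx' \leftrightarrow \vy' = a\vx' + \vb$ between the sample sets, evaluate $\hat{D}_{h_1,\mY}$ at $\vy = a\vx + \vb$, use $\lVert \vy - \vy'\rVert = a\lVert \vx - \vx'\rVert$, and read off the bandwidth relation. The paper instead takes a probabilistic detour: it interprets each summand $\tfrac{1}{h}K(\lVert\vx-\vx'\rVert/h)$ as the density of a scalar random variable $R' = \lVert \mX - \vx'\rVert$, pushes this density forward through $Q' = aR'$ via the one-dimensional change-of-variables formula $p_{Q'}(q') = p_{R'}(q'/a)/a$, and identifies the resulting expression with a KDE of bandwidth $ah$. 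Your route is shorter and stays entirely at the level of the finite sums in \autoref{eq:KDE}, avoiding the somewhat artificial random-variable framing.

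One point where your proposal is sharper than the paper: you explicitly flag the residual $1/h$ prefactor. A direct computation gives $\hat{D}_{ah,\mY}(a\vx+\vb;\mathcal{Y}) = \tfrac{1}{a}\,\hat{D}_{h,\mX}(\vx;\mathcal{X})$, so the two estimates agree only up to the global factor $1/a$, not exactly. Your observation that this constant is absorbed by the ranking step on \autoref{dds:ranking} is the right way to dispose of it, and it matches the ``comparable density function'' and ``in expectation'' hedges in the surrounding text. Your caution in point (ii) about the direction of the relation is also warranted: the computation forces the bandwidth on the scaled space $\mY$ to be $a$ times the bandwidth on $\mX$, i.e.\ $h_{\mY} = a\,h_{\mX}$, which you should state explicitly rather than relying on the theorem's own labeling of $h_0$ and $h_1$.
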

\begin{proof}
The density function $p_X$ estimated by a KDE for a random vector $\mX$ with a given bandwidth $h$ is represented as $\hat{D}_{h, \mX}$.
Recall the definition of $\hat{D}_{h, \mX}$ according to the definition of a KDE from \autoref{eq:KDE}
\begin{equation}
\hat{D}_{h, \mX}(\vx; \mathcal{X}) = \frac{1}{|\mathcal{X}|h} \sum_{\vx' \in \mathcal{X}} K\left(\frac{\lVert\vx - \vx'\rVert}{h} \right)
\end{equation}
for kernel $K(\cdot)$ and $\mathcal{X} \subseteq \R^m$.
Consider the norm as a random variable $R' = \lVert\mX - \vx'\rVert$.
Since the kernel function $K(\cdot)$ maintains the property $\int_{-\infty}^{\infty} K_h(u)\,du = 1$~\citep{chen2017tutorial}, we can define the probability density function of $R'$ to be:
\begin{equation}
p_{R'}(r') = \frac{1}{h}K\left( \frac{r'}{h} \right) = \frac{1}{h}K\left(\frac{||\vx - \vx'||}{h}\right)
\end{equation}
Hence, the density estimate for $X$ can be represented in terms of $r_i = ||\vx - \vx_i||$ as shown below:
\begin{equation}
\hat{D}_{h, \mX}(\vx; \mathcal{X}) = \frac{1}{|\mathcal{X}|h} \sum_{\vx' \in \mathcal{X}} K\left(\frac{\lVert\vx - \vx'\rVert}{h} \right)
= \frac{1}{|\mathcal{X}|} \sum_{\vx' \in \mathcal{X}} p_{R'}(r')
\end{equation}
Now, when we transform our random vector $\mX$ into $\mY = a\mX + \vb$, where $a \in \R^+$ and $b \in \R^m$, we transform the norm $R'$ into $Q'$ as follows:
\begin{align*}
R' =  ||\mX - \vx'||, Q' & =  ||\mY - \vy'|| \\
& = ||a\mX + \vb - \left(a\vx' + \vb \right)|| \\
& = ||a\mX - a\vx'|| \\
& = a||\mX - \vx'|| \\
& = aR'
\end{align*}
So, we can represent $Q'$ as $Q' = g(R') = aR'$. Since $g$ is a monotonically increasing function, we can represent the density function of $Q'$ as follows:
\begin{align*}
p_{Q'}(q') & = p_{R'}\left(g^{-1}(q')\right)  \left|\frac{d}{dq'}g^{-1}(q')\right| \\
& = \left( \frac{1}{a} \right) \left( \frac{1}{h} \right) K\left( \frac{\frac{q'}{a}}{h} \right) \\
& = \frac{1}{ah} K\left( \frac{q'}{ah} \right) \\
& = \frac{1}{ah} K\left( \frac{||\vy - \vy'||}{ah} \right) 
\end{align*}
We can apply this relationship now to the density function estimated by the KDE for $\mY$, given that $q' = ||\mY - \vy'||$:
\begin{align*}
\hat{D}_{h, \mY}(\vy; \mathcal{Y}) & = \frac{1}{|\mathcal{Y}|}  \sum_{\vy' \in \mathcal{Y}} p_{q'}(q') \\
& = \frac{1}{|\mathcal{Y}|} \sum_{\vy' \in \mathcal{Y}} \frac{1}{ah} K\left( \frac{||\vy - \vy'||}{ah} \right) \\
& = \frac{1}{|\mathcal{Y}|(ah)} \sum_{\vy' \in \mathcal{Y}} K\left(\frac{||\vy - \vy'||}{ah} \right) \\
& = \hat{D}_{ah, \mY}(\vy; \mathcal{Y})
\end{align*}
Hence, if a KDE with a bandwidth of $h$ models the density of a given random vector $\mX$, then a random vector $\mY = a\mX + \vb$ should select a bandwidth of $ah$ to maintain a comparable density function. 
\end{proof}

\section{Reservoir Sampling for Maintaining a Representative Buffer}

As outlined in \autoref{sec:dds}, DDS maintains a buffer $\gB$ that stores the features of sampled solutions.
While it would be ideal to maintain an infinite buffer that stores all features ever encountered, due to runtime and memory constraints of KDE and CNF, we maintain a finite-sized buffer $\gB$ of size $|\gB| = b$.
Moreover, the subset of features retained by $\gB$ must accurately represent all the features ever encountered.
This can be described as the \emph{reservoir sampling} problem.

\begin{problem}[Reservoir Sampling~\citep{li1994reservior}]
Suppose there is a finite population of unknown size $N$. Derive an algorithm that draws a simple random sample of size $b$ from this population by scanning it exactly once, with the memory constraint such that the algorithm can remember at most $b$ samples at a time.
\end{problem}

The population consists of the features that are discovered by DDS throughout its entire execution.
Since the buffer $\gB$ is a simple random sample of all features, the density function estimated on $\gB$ will be close to the density function of all features.
Thus, with reservoir sampling, we are only required to maintain a subset of all discovered features without severely altering the shape of the density estimation of the feature space.
In DDS, this is achieved by leveraging algorithm L, the optimal reservoir sampling algorithm~\citep{li1994reservior}.

\section{Proofs} \label{sec:proofs}

\begin{theorem}
    A kernel density estimate $\hat{D}_h(\vx;\gB)$ managed with reservoir sampling, such that features in the buffer are exchanged one at a time, is $\frac{1}{|\gB|h}$-uniformly stable, where $h$ is the bandwidth.
\end{theorem}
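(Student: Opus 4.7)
The plan is to reduce the claim to a direct one-term difference in the KDE sum, since reservoir sampling exchanges features one at a time, so $|\gB'|=|\gB|$ and the two buffers differ in exactly one element. Writing $\gB' = (\gB\setminus\{\vy\})\cup\{\vy'\}$ for some $\vy,\vy'\in\R^m$, every feature other than this single swapped one appears identically in the two sums, so the shared terms cancel.

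Concretely, I would expand
\begin{align*}
\hat{D}_h(\vx;\gB) - \hat{D}_h(\vx;\gB')
&= \frac{1}{|\gB|h}\left[K\!\left(\frac{\lVert \vx-\vy\rVert}{h}\right) - K\!\left(\frac{\lVert \vx-\vy'\rVert}{h}\right)\right],
\end{align*}
using directly the definition of KDE in \autoref{eq:KDE} and the fact that the normalization $\tfrac{1}{|\gB|h}$ is identical for both buffers. Taking absolute values and then the supremum over $\vx \in \R^m$,
\begin{align*}
\sup_{\vx\in\R^m}\bigl|\hat{D}_h(\vx;\gB) - \hat{D}_h(\vx;\gB')\bigr|
\;\le\; \frac{1}{|\gB|h}\,\sup_{u,v\ge 0}\bigl|K(u)-K(v)\bigr|.
\end{align*}

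The remaining step is to observe that the standard kernels used in KDE (Gaussian, triangular, Epanechnikov, uniform, etc.) are non-negative and bounded above by $1$, so $|K(u)-K(v)|\le 1$ everywhere, which yields the advertised bound $\tfrac{1}{|\gB|h}$. The main (mild) subtlety I expect is making this kernel-range assumption explicit: if one works with an unnormalized kernel whose peak is a constant $c$, the bound picks up a factor of $c$, and the telescoping argument itself is otherwise routine. No other obstacle arises, because uniform stability here only requires comparing single-element-swap buffers of equal size, which is exactly what reservoir sampling with one-at-a-time exchanges produces.
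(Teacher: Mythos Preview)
Your proposal is correct and follows essentially the same argument as the paper: write $\gB'$ as $\gB$ with one element swapped, cancel the common terms in the two KDE sums, and bound the remaining single-kernel difference by $1$ using $0\le K(\cdot)\le 1$. Your added remark about making the kernel-range assumption explicit (and the factor of $c$ for unnormalized kernels) is a nice clarification the paper leaves implicit.
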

\begin{proof}
Let buffer $\gB \subseteq \R^m$ and $\gB' = \gB \setminus \{\vx_i\} \cup \{\vx_j\}$ for $\vx_i \in \gB$ and $\vx_j \in \R^m$.
Consider the maximum difference between $\hat{D}_h(\vx;\gB)$ and $\hat{D}_h(\vx;\gB')$:
\begin{align}
\sup_{\vx \in \R^d} \left|\hat{D}_h(\vx;\gB) - \hat{D}_h(\vx;\gB')\right| &=  \sup_{\vx \in \R^m} \left|\frac{1}{|\mathcal{\gB}|h}\sum_{\va \in \gB} K\left(\frac{\lVert \vx - \va \rVert}{h}\right) - \frac{1}{|\mathcal{\gB}|h}\sum_{\va' \in \gB'} K\left(\frac{\lVert \vx-\va' \rVert}{h}\right)\right| \\
&= \frac{1}{|\mathcal{\gB}|h} \sup_{\vx \in \R^m} \left|\sum_{\va \in \gB} K\left(\frac{\lVert \vx-\va \rVert}{h}\right) - \sum_{\va' \in \gB'} K\left(\frac{\lVert \vx-\va' \rVert}{h}\right)\right| \label{eq:cancel}\\
&= \frac{1}{|\mathcal{\gB}|h} \sup_{\vx \in \R^m} \left|K\left(\frac{\lVert \vx-\vx_i \rVert}{h}\right) - K\left(\frac{\lVert \vx-\vx_j \rVert}{h}\right)\right| \\
&\le \frac{1}{|\mathcal{\gB}|h} \label{eq:last}
\end{align}
Since $\gB$ and $\gB'$ differ by exactly one element, each term in the summation for \autoref{eq:cancel} cancels except for $\vx_i$ and $\vx_j$.
\autoref{eq:last} is due to the fact that any kernel $K(\cdot)$ is bounded between $[0,1]$ by definition.
\end{proof}

\begin{theorem}
    Novelty score $\rho(\vx;\gB)$ is $\frac{W}{k}$-uniformly stable, where $k$ is the nearest neighbors parameter in novelty score and $W = \max_{\vs_1,\vs_2 \in S} \lVert \vs_1 - \vs_2 \rVert$ is the diameter of the feature space $S$.
\end{theorem}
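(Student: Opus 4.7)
The plan is to mirror the structure of the proof of \autoref{thm:kde_stability}. Take two buffers $\gB, \gB' \subseteq S$ differing in exactly one element and write $\gB' = (\gB \setminus \{\vx_i\}) \cup \{\vx_j\}$ for some $\vx_i \in \gB$ and $\vx_j \in S$. The goal is to bound $|\rho(\vx;\gB) - \rho(\vx;\gB')|$ uniformly in $\vx$, and then take the supremum.

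The first key step is a combinatorial observation: for any query point $\vx$, the symmetric difference $N_k(\vx;\gB) \triangle N_k(\vx;\gB')$ has cardinality at most two. To see this, sort $\gB$ by distance to $\vx$ and split on (a) whether $\vx_i$ was among the $k$ closest points and (b) whether $\vx_j$ is closer to $\vx$ than the $k$-th closest point of $\gB \setminus \{\vx_i\}$. In the four resulting subcases, either $N_k(\vx;\gB) = N_k(\vx;\gB')$ or exactly one ``old'' point $\vx_\mathrm{old}$ leaves the neighbor set and one ``new'' point $\vx_\mathrm{new}$ enters. Ties can be broken by any fixed convention since they do not affect the value of $\rho$.

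The second step is arithmetic. In the nontrivial case, all terms in the two sums defining $\rho(\vx;\gB)$ and $\rho(\vx;\gB')$ cancel except those involving $\vx_\mathrm{old}$ and $\vx_\mathrm{new}$, so
\begin{equation}
|\rho(\vx;\gB) - \rho(\vx;\gB')| = \frac{1}{k}\bigl|\lVert \vx - \vx_\mathrm{old}\rVert - \lVert \vx - \vx_\mathrm{new}\rVert\bigr|.
\end{equation}
By the reverse triangle inequality, the right-hand side is at most $\frac{1}{k}\lVert \vx_\mathrm{old} - \vx_\mathrm{new}\rVert$. Since $\vx_\mathrm{old}, \vx_\mathrm{new} \in S$, the definition of the diameter gives $\lVert \vx_\mathrm{old} - \vx_\mathrm{new}\rVert \le W$. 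Hence $|\rho(\vx;\gB) - \rho(\vx;\gB')| \le W/k$ for every $\vx$, and taking the supremum yields the claimed $\frac{W}{k}$-uniform stability.

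The main obstacle I anticipate is the case analysis in the first step: one has to be careful that the neighbor sets really change by at most one element, including when $\vx_j$ lands inside the old $k$-NN set while $\vx_i$ was already outside it (a different point gets displaced, not two). Once that bookkeeping is in place, the rest reduces to the reverse triangle inequality and the definition of $W$, and the bound holds uniformly over $\vx \in \R^m$ (no restriction $\vx \in S$ is needed, since the reverse triangle bound depends only on $\vx_\mathrm{old}, \vx_\mathrm{new} \in S$).
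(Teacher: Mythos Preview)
Your proposal is correct and follows essentially the same route as the paper: reduce to a single swap in the $k$-NN set, cancel all but one pair of terms, and apply the reverse triangle inequality together with the diameter bound. The only differences are cosmetic---you work with the replacement $\gB' = (\gB \setminus \{\vx_i\}) \cup \{\vx_j\}$ (matching the KDE proof) whereas the paper uses the addition $\gB' = \gB \cup \{\vx_j\}$, and you spell out the four-subcase argument for $|N_k(\vx;\gB)\triangle N_k(\vx;\gB')|\le 2$ that the paper asserts in one line.
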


\begin{proof}

Let buffer $\gB \subseteq \R^m$ and $\gB' = \gB \cup \{\vx_j\}$ for $\vx_j \in \R^m$.
Since we constructed $\gB'$ by adding a single element to $\gB$, $N_k(\vx, \gB)$ and $N_k(\vx, \gB')$ differ by at most one element $\vx_i \in \gB$ and $\vx_j \in \gB'$.
Hence,
\begin{align*}
\sup_{\vx \in \R^d} \left|\rho(\vx;\gB) - \rho(\vx;\gB')\right| &=
\sup_{\vx\in\R^d} \left|\frac{1}{k} \sum_{\va \in N_k(\vx;\gB)} \lVert \vx - \va\rVert - \frac{1}{k} \sum_{\va \in N_k(\vx;\gB')} \lVert \vx - \va\rVert\right| \\
&=  \frac{1}{k} \sup_{\vx\in\R^d} \left|\lVert \vx - \vx_i\rVert - \lVert \vx - \vx_j\rVert\right| \\
&\le \frac{1}{k} \sup_{\vx\in\R^d} \lVert \vx_i - \vx_j \rVert \\
&\le \frac{W}{k}
\end{align*}
\end{proof}

\begin{theorem}
Let $\gB \subseteq \R^m$ be a set of features.
Consider the rankings $\pi_\mathrm{NS}$ and $\pi_\mathrm{KDE}$ on another set of features $\{\vphi_1, \dots, \vphi_m\} \subseteq \R^m$ where $\vphi_i = \vphi(\vtheta_i)$.
We define the rankings as follows: $\pi_\mathrm{NS}(i) \ge \pi_\mathrm{NS}(j) \iff \rho(\vphi_i; \gB) \ge \rho(\vphi_j; \mathcal{X})$ and $\pi_\mathrm{KDE}(i) \ge \pi_\mathrm{KDE}(j) \iff \hat{D}(\vphi_i; \gB) \le \hat{D}(\vphi_j; \gB)$.
We show that $\pi_{NS} = \pi_{KDE}$, when NS has $k=\infty$ (or, equivalently, $k = |\gB|$) and KDE has triangular kernel $K(u) = 1 - |u|$ with support over the entire feature space $S$.
\end{theorem}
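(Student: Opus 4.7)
\textbf{Proof plan for \autoref{thm:ns=kde}.} The plan is to show that under the stated hypotheses, $\hat{D}_h(\vphi_i; \gB)$ is a strictly decreasing affine function of $\rho(\vphi_i; \gB)$; this immediately implies that the KDE-ascending ranking and the novelty-descending ranking coincide. The argument is essentially an algebraic rewrite, so the main work is lining up the two expressions and then invoking a monotonicity observation.

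\textbf{Step 1: simplify novelty score.} First I would substitute $k = |\gB|$ into \autoref{eq:ns}, so that $N_k(\vphi_i;\gB) = \gB$ and
\begin{equation*}
\rho(\vphi_i; \gB) \;=\; \frac{1}{|\gB|} \sum_{\vy' \in \gB} \lVert \vphi_i - \vy' \rVert .
\end{equation*}
This reduces novelty score to an average distance from $\vphi_i$ to every point in the buffer.

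\textbf{Step 2: simplify KDE with triangular kernel.} Next, I would plug $K(u) = 1 - |u|$ into the KDE formula (\autoref{eq:KDE}). The hypothesis that the kernel has support over the entire feature space $S$ is crucial here: it ensures that $\lVert \vphi_i - \vy'\rVert/h \le 1$ for every pair in question, so the kernel value $1 - \lVert \vphi_i - \vy' \rVert / h$ is never clipped to zero and the closed-form expression applies without case analysis. This gives
\begin{equation*}
\hat{D}_h(\vphi_i;\gB) \;=\; \frac{1}{|\gB|\,h}\sum_{\vy'\in\gB}\left(1 - \frac{\lVert \vphi_i - \vy'\rVert}{h}\right) \;=\; \frac{1}{h} \;-\; \frac{1}{h^2}\,\rho(\vphi_i;\gB),
\end{equation*}
where I used Step 1 to recognize the sum of distances as $|\gB|\,\rho(\vphi_i;\gB)$.

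\textbf{Step 3: deduce equality of rankings.} The map $\rho \mapsto h^{-1} - h^{-2}\rho$ is a strictly decreasing affine function (since $h>0$, the slope $-1/h^2$ is strictly negative). Hence for any two indices $i,j$,
\begin{equation*}
\rho(\vphi_i;\gB) \;\ge\; \rho(\vphi_j;\gB) \;\Longleftrightarrow\; \hat{D}_h(\vphi_i;\gB) \;\le\; \hat{D}_h(\vphi_j;\gB).
\end{equation*}
By the definitions of $\pi_{\mathrm{NS}}$ and $\pi_{\mathrm{KDE}}$ given in the statement, this is exactly the assertion $\pi_{\mathrm{NS}} = \pi_{\mathrm{KDE}}$.

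\textbf{Where the real content sits.} There is essentially no hard step here; the substantive point is recognizing that the triangular kernel is the unique choice (up to affine reparameterization) that makes the KDE an affine function of pairwise distances, and that the full-support hypothesis is exactly what prevents any nonlinear clipping. The $k=|\gB|$ (or $k=\infty$) hypothesis plays the analogous role for novelty score, removing the nearest-neighbor truncation so that the sum runs over the whole buffer. I expect the only subtlety to flag is that the equivalence $k=\infty \equiv k=|\gB|$ relies on the convention that requesting more neighbors than are available simply returns the entire buffer, so it is worth stating this convention explicitly before carrying out Step 1.
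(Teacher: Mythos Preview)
Your proposal is correct and follows essentially the same approach as the paper: both reduce to showing that, with $k=|\gB|$ and the triangular kernel supported on all of $S$, the KDE is a strictly decreasing affine function of the novelty score, from which the ranking equivalence is immediate. The only cosmetic difference is that the paper fixes $h=1$ (obtaining $\hat{D}=1-\rho$) whereas you keep $h$ general and arrive at $\hat{D}_h = h^{-1} - h^{-2}\rho$; the monotonicity conclusion is identical.
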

\begin{proof}
Recall the novelty score function,
\begin{equation*}
\rho(\vx; \gB) = \frac{1}{|\gB|} \sum_{\vb \in \gB} \lVert\vx - \vb\rVert
\end{equation*}
Consider the kernel density estimator with $h = 1$ and triangular kernel,
\begin{align*}
\hat{D}(\vx; \gB) &= \frac{1}{|\gB|} \sum_{\vb \in \gB} K\left(\lVert \vx - \vb\rVert\right) \\
                          &= \frac{1}{|\gB|} \sum_{\vb \in \gB} (1 - \lVert \vx - \vb\rVert) \\
                          &= 1 - \frac{1}{|\gB|} \sum_{\vb \in \gB} \lVert \vx - \vb\rVert \\
                          &= 1 - \rho(\vx; \gB)
\end{align*}
Thus, $\rho(\vphi_i; \gB) \le \rho(\vphi_j; \gB) \iff  \hat{D}(\vphi_i; \gB) \ge \hat{D}(\vphi_j; \gB)$.
\end{proof}

\newpage

\section{Statistical Analysis} \label{sec:stats}
We include results from our statistical analysis in \autoref{sec:experiments}.
We performed one-way ANOVAs in each domain, shown in \autoref{tab:anova}.
We also performed pairwise comparisons with Tukey's HSD, shown in \autoref{tab:pairwise_coverage} and \autoref{tab:pairwise_ce}.

\begin{table}[thpb]
  \caption{
    One-way ANOVA results in each domain. All $p$-values are less than 0.001.
  }
  \label{tab:anova}
  \centering
  {
  \setlength{\tabcolsep}{3pt}
  \fontsize{7.8}{7.1}\selectfont
  \begin{tabular}{lllll}
  \toprule
  {}              & \multicolumn{1}{c}{Coverage} & \multicolumn{1}{c}{Cross-Entropy} \\
  \midrule
  LP              & $F(5, 54) = 206.61$  & $F(5, 54) = 156.04$  \\
  Arm Repertoire  & $F(5, 54) = 9691.05$  & $F(5, 54) = 1523.35$  \\
  Deceptive Maze  & $F(5, 54) = 23.89$ & $F(5, 54) = 39.33$  \\
  Multi-feature LP  & $F(5, 54) = 3350$ & $F(5, 54) = 466.49$ \\
  \bottomrule
  \end{tabular}
  }
\end{table}

\begin{table}[thpb]
  \caption{Pairwise comparisons for coverage in each domain. Each entry compares the method in the row to the method in the column. For instance, we can see that DDS-KDE had significantly higher coverage than NS in LP. Comparisons were performed with Tukey's HSD test. $>$ indicates significantly greater, $<$ indicates significantly less, and $-$ indicates no significant difference. $\varnothing$ indicates an invalid comparison.}
  \centering
  \label{tab:pairwise_coverage}
  \setlength{\tabcolsep}{2pt}
  \fontsize{7.8}{7.1}\selectfont
  \begin{tabular}{lrrrrrrrrrrrrrrrrrrrrrrrr}
  \toprule
   & \multicolumn{6}{c}{LP} & \multicolumn{6}{c}{Arm Repertoire} & \multicolumn{6}{c}{Deceptive Maze} & \multicolumn{6}{c}{Multi-dim LP} \\
   \cmidrule(r){2-7}
   \cmidrule(r){8-13}
   \cmidrule(r){14-19}
   \cmidrule(r){20-25}
   & \rot{DDS-KDE} & \rot{DDS-CNF} & \rot{NS} & \rot{CMA-MAE} & \rot{CMA-ME} & \rot{MAP-Elites (line)} & \rot{DDS-KDE} & \rot{DDS-CNF} & \rot{NS} & \rot{CMA-MAE} & \rot{CMA-ME} & \rot{MAP-Elites (line)} & \rot{DDS-KDE} & \rot{DDS-CNF} & \rot{NS} & \rot{CMA-MAE} & \rot{CMA-ME} & \rot{MAP-Elites (line)} & \rot{DDS-KDE} & \rot{DDS-CNF} & \rot{NS} & \rot{CMA-MAE} & \rot{CMA-ME} & \rot{MAP-Elites (line)} \\
  \midrule
  DDS-KDE & $\varnothing$ & $-$ & $>$ & $>$ & $>$ & $>$ & $\varnothing$ & $-$ & $>$ & $>$ & $>$ & $>$ & $\varnothing$ & $>$ & $>$ & $-$ & $>$ & $-$ & $\varnothing$ & $>$ & $>$ & $>$ & $>$ & $>$ \\
  DDS-CNF & $-$ & $\varnothing$ & $>$ & $>$ & $>$ & $>$ & $-$ & $\varnothing$ & $>$ & $>$ & $>$ & $>$ & $<$ & $\varnothing$ & $>$ & $<$ & $-$ & $-$ & $<$ & $\varnothing$ & $>$ & $>$ & $>$ & $>$ \\
  NS & $<$ & $<$ & $\varnothing$ & $<$ & $<$ & $<$ & $<$ & $<$ & $\varnothing$ & $>$ & $>$ & $>$ & $<$ & $<$ & $\varnothing$ & $<$ & $-$ & $<$ & $<$ & $<$ & $\varnothing$ & $<$ & $>$ & $>$ \\
  CMA-MAE & $<$ & $<$ & $>$ & $\varnothing$ & $>$ & $>$ & $<$ & $<$ & $<$ & $\varnothing$ & $>$ & $>$ & $-$ & $>$ & $>$ & $\varnothing$ & $>$ & $>$ & $<$ & $<$ & $>$ & $\varnothing$ & $>$ & $>$ \\
  CMA-ME & $<$ & $<$ & $>$ & $<$ & $\varnothing$ & $-$ & $<$ & $<$ & $<$ & $<$ & $\varnothing$ & $<$ & $<$ & $-$ & $-$ & $<$ & $\varnothing$ & $<$ & $<$ & $<$ & $<$ & $<$ & $\varnothing$ & $<$ \\
  MAP-Elites (line) & $<$ & $<$ & $>$ & $<$ & $-$ & $\varnothing$ & $<$ & $<$ & $<$ & $<$ & $>$ & $\varnothing$ & $-$ & $-$ & $>$ & $<$ & $>$ & $\varnothing$ & $<$ & $<$ & $<$ & $<$ & $>$ & $\varnothing$ \\
  \bottomrule
  \end{tabular}
\end{table}

\begin{table}[thpb]
  \caption{Pairwise comparisons for cross-entropy in each domain. Note that lower cross-entropy is better, so significantly less ($<$) indicates that a method is significantly better.}
  \centering
  \label{tab:pairwise_ce}
  \setlength{\tabcolsep}{2pt}
  \fontsize{7.8}{7.1}\selectfont
  \begin{tabular}{lrrrrrrrrrrrrrrrrrrrrrrrr}
  \toprule
   & \multicolumn{6}{c}{LP} & \multicolumn{6}{c}{Arm Repertoire} & \multicolumn{6}{c}{Deceptive Maze} & \multicolumn{6}{c}{Multi-dim LP} \\
   \cmidrule(r){2-7}
   \cmidrule(r){8-13}
   \cmidrule(r){14-19}
   \cmidrule(r){20-25}
   & \rot{DDS-KDE} & \rot{DDS-CNF} & \rot{NS} & \rot{CMA-MAE} & \rot{CMA-ME} & \rot{MAP-Elites (line)} & \rot{DDS-KDE} & \rot{DDS-CNF} & \rot{NS} & \rot{CMA-MAE} & \rot{CMA-ME} & \rot{MAP-Elites (line)} & \rot{DDS-KDE} & \rot{DDS-CNF} & \rot{NS} & \rot{CMA-MAE} & \rot{CMA-ME} & \rot{MAP-Elites (line)} & \rot{DDS-KDE} & \rot{DDS-CNF} & \rot{NS} & \rot{CMA-MAE} & \rot{CMA-ME} & \rot{MAP-Elites (line)} \\
  \midrule
  DDS-KDE & $\varnothing$ & $-$ & $<$ & $<$ & $<$ & $<$ & $\varnothing$ & $>$ & $<$ & $>$ & $<$ & $>$ & $\varnothing$ & $<$ & $<$ & $-$ & $<$ & $<$ & $\varnothing$ & $<$ & $<$ & $<$ & $<$ & $<$ \\
  DDS-CNF & $-$ & $\varnothing$ & $<$ & $-$ & $<$ & $<$ & $<$ & $\varnothing$ & $<$ & $>$ & $<$ & $-$ & $>$ & $\varnothing$ & $<$ & $>$ & $-$ & $-$ & $>$ & $\varnothing$ & $-$ & $<$ & $-$ & $<$ \\
  NS & $>$ & $>$ & $\varnothing$ & $>$ & $>$ & $>$ & $>$ & $>$ & $\varnothing$ & $>$ & $<$ & $>$ & $>$ & $>$ & $\varnothing$ & $>$ & $-$ & $>$ & $>$ & $-$ & $\varnothing$ & $<$ & $-$ & $<$ \\
  CMA-MAE & $>$ & $-$ & $<$ & $\varnothing$ & $<$ & $<$ & $<$ & $<$ & $<$ & $\varnothing$ & $<$ & $<$ & $-$ & $<$ & $<$ & $\varnothing$ & $<$ & $<$ & $>$ & $>$ & $>$ & $\varnothing$ & $>$ & $<$ \\
  CMA-ME & $>$ & $>$ & $<$ & $>$ & $\varnothing$ & $-$ & $>$ & $>$ & $>$ & $>$ & $\varnothing$ & $>$ & $>$ & $-$ & $-$ & $>$ & $\varnothing$ & $>$ & $>$ & $-$ & $-$ & $<$ & $\varnothing$ & $<$ \\
  MAP-Elites (line) & $>$ & $>$ & $<$ & $>$ & $-$ & $\varnothing$ & $<$ & $-$ & $<$ & $>$ & $<$ & $\varnothing$ & $>$ & $-$ & $<$ & $>$ & $<$ & $\varnothing$ & $>$ & $>$ & $>$ & $>$ & $>$ & $\varnothing$ \\
  \bottomrule
  \end{tabular}
\end{table}

\newpage

\section{Coverage Map in Feature Space} \label{sec:heatmaps}

\begin{figure*}[htbp!]
    \centering
    \includegraphics[width=\linewidth]{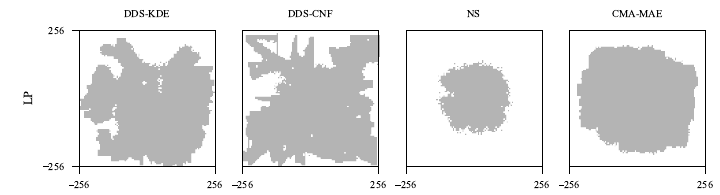}
    \includegraphics[width=\linewidth]{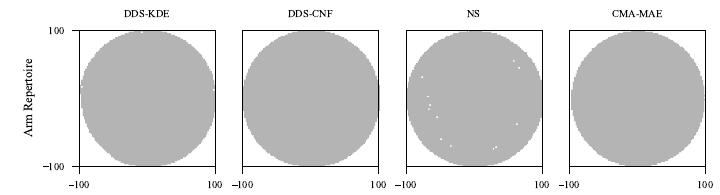}
    \includegraphics[width=\linewidth]{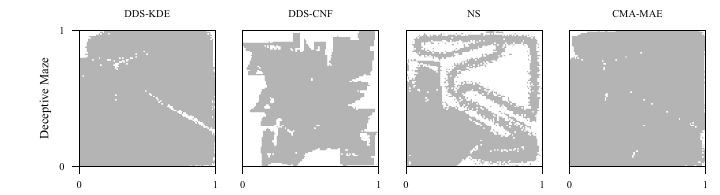}
    \caption{Heatmaps of DDS-KDE, DDS-CNF, NS, and CMA-MAE in the three domains that have 2D feature spaces (LP, Arm Repertoire, Deceptive Maze).} \label{fig:heatmaps}
\end{figure*}

\end{document}